\PassOptionsToPackage{numbers,sort&compress}{natbib}
\documentclass{article}

\usepackage[preprint]{neurips_2026}

\usepackage[utf8]{inputenc} 
\usepackage[T1]{fontenc} 
\usepackage[colorlinks=true,citecolor=blue,linkcolor=blue,urlcolor=blue]{hyperref}     
\usepackage{url}            
\usepackage{booktabs}       
\usepackage{amsfonts}       
\usepackage{nicefrac}       
\usepackage{microtype}      
\usepackage{xcolor}         
\usepackage{rotating}
\usepackage{amsmath,amssymb}
\usepackage{amsthm}
\usepackage{tikz-cd}
\usetikzlibrary{positioning}
\usepackage{multicol}
\newtheorem{lemma}{Lemma}
\usepackage{algorithm}
\usepackage{algpseudocode}
\usepackage{multirow}
\newtheorem*{theorem*}{Theorem}
\usepackage{caption}
\usepackage{placeins}

\usepackage{fontawesome5}
\usepackage{caption}
\usepackage{amsthm}

\theoremstyle{plain}   
\newtheorem{theorem}{Theorem}[section]

\newtheorem{proposition}{Proposition}[section]
\newtheorem{corollary}{Corollary}

\theoremstyle{definition}  

\title{Local Inverse Geometry Can Be Amortized}

\author{
Aaditya L. Kachhadiya\thanks{\texttt{kachhadiyaaaditya@gmail.com
}}\\
Independent Researcher\\
Surat, India\\
\href{https://github.com/AadityaKachhadiya/deceptron}{\textcolor{blue}{\faGithub\ github.com/AadityaKachhadiya/deceptron}}
}

\begin{document}

\maketitle

\begin{abstract}
Nonlinear inverse problems often trade inexpensive but fragile first-order updates against curvature-aware methods such as Gauss--Newton and Levenberg--Marquardt, which obtain stronger directions by repeatedly solving Jacobian-based linearized systems. We propose a learned alternative: amortize local inverse geometry into a reusable reverse operator. Our framework learns a bidirectional surrogate, Deceptron, and deploys it through D-IPG (Deceptron Inverse-Preconditioned Gradient), an iterative solver that pulls residual-corrected measurement-space proposals back to latent space. The key mechanism is a Jacobian Composition Penalty (JCP), which trains the reverse Jacobian to act as a local left inverse of the forward Jacobian; its runtime counterpart, RJCP, measures the same inverse-consistency error along optimization trajectories. We prove that D-IPG is first-order equivalent to damped Gauss--Newton under local pseudoinverse consistency, with deviation controlled by composition error and conditioning. Across seven PDE inverse-problem benchmarks, D-IPG outperforms standard baselines, achieves \(94.8\%\) mean success across the six-problem reliability suite, and reaches comparable or better recovery quality at up to \(77\times\) lower inference-time solve cost on the main benchmarks.

\end{abstract}

\section{Introduction}
\label{sec:introduction}

Nonlinear inverse problems require updates that are simultaneously informative, stable, and computationally tractable. Existing optimizers divide sharply along these axes: first-order methods are inexpensive but can be slow or unreliable when the forward Jacobian is ill-conditioned, whereas curvature-aware methods such as Gauss--Newton and Levenberg--Marquardt provide stronger local directions by repeatedly solving linearized inverse problems \citep{levenberg1944method,marquardt1963algorithm,nocedal2006numerical}. The resulting bottleneck is therefore not only forward evaluation, but repeated reconstruction of local inverse geometry during optimization.

This paper asks whether that local inverse geometry can instead be amortized: learned once and reused across inverse solves. The goal is not to replace the forward model by a one-shot inverse predictor, but to learn a reverse operator whose differential action converts measurement residuals into informative latent-space corrections. Concretely, we introduce a learned bidirectional module, \emph{Deceptron}, consisting of a differentiable forward surrogate and a reverse map trained to provide local inverse-oriented differential structure. At inference time, Deceptron induces \emph{D-IPG} (Deceptron Inverse-Preconditioned Gradient), an iterative solver that forms residual-corrected proposals in measurement space and maps them back to latent space through the learned reverse model.

The central technical requirement is therefore differential rather than functional. For D-IPG to behave like a useful inverse solver, the learned reverse Jacobian must approximate the local inverse action of the forward Jacobian on the directions that matter for nonlinear least squares. We promote this structure through a Jacobian Composition Penalty (JCP), together with a corresponding runtime diagnostic, RJCP, that respectively shape and measure local inverse consistency of the learned reverse Jacobian. Section~\ref{sec:theory} shows that, under local pseudoinverse consistency, the induced D-IPG proposal is first-order equivalent to damped Gauss--Newton, and that its deviation from the damped Gauss--Newton direction is controlled by composition error and local conditioning.

We evaluate the proposed framework on PDE inverse problems spanning 1-, 2-, and 3-dimensional heat-equation initial-condition recovery, together with harder 2-dimensional systems including Darcy flow, advection--diffusion, Allen--Cahn, and Navier--Stokes inversion. Across these settings, D-IPG consistently outperforms standard first-order baselines, remains competitive with strong second-order methods, and often attains similar recovery quality at substantially lower runtime. More broadly, the paper advances a simple viewpoint: local inverse geometry, traditionally reconstructed repeatedly through classical second-order machinery, can be amortized into a learned reverse operator and reused inside a safeguarded iterative solver.

\section{Related Work}
\label{sec:related}

Our work is most closely related to classical nonlinear least-squares methods, especially Gauss--Newton, Levenberg--Marquardt, and quasi-Newton procedures, which exploit local curvature information through Jacobian- or Hessian-based linear solves \citep{levenberg1944method,marquardt1963algorithm,nocedal2006numerical}. These methods remain strong baselines for inverse problems, but their cost is dominated by repeated reconstruction of local inverse structure at each iterate. Our objective is not to discard this geometry, but to amortize part of its local inverse action into a reusable learned operator.

The paper is also related to learned inverse solvers, amortized inference, and unrolled optimization methods, which use learned mappings or fixed-horizon iterative architectures to accelerate inverse recovery \citep{gregor2010learning,andrychowicz2016learning,monga2021algorithm,adler2018learned}. Our setting differs in two respects. First, the learned reverse map is not itself the full solver: it supplies local inverse-oriented update information to a separate inference-time optimizer equipped with explicit descent safeguards. Second, our target is not direct functional inversion but usable local inverse action for iterative nonlinear least squares.

More broadly, operator-learning approaches such as DeepONets and Fourier neural operators learn solution or inverse maps between function spaces \citep{lu2021learning,li2021fourier}, while learned preconditioners aim to accelerate iterative linear solves. Our setting differs from both strands. We do not learn a direct solution operator from observation to latent state, nor a preconditioner for a fixed linear system; instead, we learn a reverse differential operator queried along the trajectory of a nonlinear least-squares solver. Finally, our use of Jacobian composition differs from standard reconstruction or cycle-consistency losses: JCP is introduced not as a representation regularizer, but as an optimization-theoretic constraint that promotes local inverse action of the learned reverse Jacobian and thereby supports a Gauss--Newton-like update mechanism.

\section{Method}
\label{sec:method}

We consider inverse recovery under a differentiable surrogate forward map
\[
f_W:\mathbb{R}^{d_{\mathrm{in}}}\to\mathbb{R}^{d_{\mathrm{out}}},
\]
where \(x\in\mathbb{R}^{d_{\mathrm{in}}}\) denotes the latent variable and \(y^\star\in\mathbb{R}^{d_{\mathrm{out}}}\) the observed measurement. At inference time, all learned model parameters are fixed, and optimization is performed only over \(x\). We work with the surrogate nonlinear least-squares objective
\begin{equation}
\Phi(x):=\frac{1}{2}\|f_W(x)-y^\star\|_2^2,
\qquad
r(x):=f_W(x)-y^\star,
\label{eq:method_objective}
\end{equation}
with Jacobian \(J_f(x):=\nabla_x f_W(x)\). The central objective is to replace repeated Jacobian-based local inverse reconstruction by a learned reverse operator that can be reused across iterations. Throughout, all norms are Euclidean for vectors and induced spectral norms for matrices unless stated otherwise.

\begin{algorithm}[t]
\caption{D-IPG inference-time update}
\label{alg:dipg}
\begin{algorithmic}[1]
\Require Observation \(y^\star\), learned module \((f_W,g_V)\), initial state \(x_0\), feasible set \(\mathcal C\), relaxation \(\rho\in(0,1]\), initial step size \(\alpha_0>0\), Armijo constant \(c\in(0,1)\), backtracking factor \(\beta\in(0,1)\), maximum iterations \(T\)
\For{\(t=0,1,\dots,T-1\)}
    \State \(y_t \gets f_W(x_t)\), \quad \(r_t \gets y_t-y^\star\)
    \If{stopping criterion satisfied}
        \State \textbf{break}
    \EndIf
    \State \(\alpha \gets \alpha_0\)
    \Repeat
        \State \(y^{\mathrm{prop}} \gets y_t-\alpha r_t\)
        \State \(x^{\mathrm{prop}} \gets g_V(y^{\mathrm{prop}})\)
        \State \(\widetilde{x} \gets \Pi_{\mathcal C}\!\big((1-\rho)x_t+\rho x^{\mathrm{prop}}\big)\)
        \State \(p_t \gets \widetilde{x}-x_t\)
        \If{\(\Phi(\widetilde{x}) \le \Phi(x_t)+c\,\nabla\Phi(x_t)^\top p_t\)}
            \State \(x_{t+1}\gets \widetilde{x}\)
            \State \textbf{accept and continue}
        \Else
            \State \(\alpha \gets \beta\alpha\)
        \EndIf
    \Until{step accepted or backtracking budget exhausted}
    \If{no step accepted}
        \State \textbf{break}
    \EndIf
\EndFor
\State \Return \(x_t\)
\end{algorithmic}
\end{algorithm}

\subsection{D-IPG and learned reverse maps}
\label{sec:dipg}

Our framework separates the learned object from the inference-time solver built on top of it. \emph{Deceptron} denotes the learned bidirectional pair \((f_W,g_V)\), where \(f_W\) is the forward surrogate and
\[
g_V:\mathbb{R}^{d_{\mathrm{out}}}\to\mathbb{R}^{d_{\mathrm{in}}}
\]
is a learned reverse map. \emph{D-IPG}, short for Deceptron Inverse-Preconditioned Gradient, is the iterative optimizer induced by this module. The reverse map is not intended to be a globally exact inverse of \(f_W\); rather, it is trained to provide local inverse-oriented update information relative to the geometry induced by \(f_W\). Thus the learned module carries amortized local inverse structure, while the solver remains an explicit optimization procedure with standard acceptance safeguards.

At iteration \(t\), let
\begin{equation}
y_t=f_W(x_t),\qquad r_t=y_t-y^\star.
\label{eq:yt_rt}
\end{equation}
D-IPG first forms a residual-corrected proposal in measurement space,
\begin{equation}
y^{\mathrm{prop}}_{t+1}=y_t-\alpha_t r_t,
\label{eq:y_prop}
\end{equation}
then maps this proposal through the learned reverse model,
\begin{equation}
x^{\mathrm{prop}}_{t+1}=g_V\!\left(y^{\mathrm{prop}}_{t+1}\right),
\label{eq:x_prop}
\end{equation}
and finally applies relaxation and projection,
\begin{equation}
x_{t+1}
=
\Pi_{\mathcal C}\!\left((1-\rho)x_t+\rho x^{\mathrm{prop}}_{t+1}\right),
\label{eq:x_update}
\end{equation}
where \(\Pi_{\mathcal C}\) denotes projection onto a feasible set \(\mathcal C\) and \(\rho\in(0,1]\) is a relaxation parameter. The step size \(\alpha_t\) is selected by Armijo backtracking on \(\Phi\) \citep{armijo1966minimization}, so the method retains an explicit sufficient-decrease acceptance test even though the proposal direction is generated through the learned reverse map.

The unit value \(\alpha_t=1\) has a natural interpretation. Before relaxation and projection, \eqref{eq:y_prop} sends the measurement-space proposal exactly to \(y^\star\), so \(\alpha_t=1\) is the canonical residual-correction scale when \(g_V\) is locally inverse-consistent. Smaller values under-correct in measurement space, while larger values extrapolate past the observation and can enter an incorrect basin.

Although \eqref{eq:y_prop} is written in measurement space, D-IPG is fundamentally an \(x\)-space solver: the optimization variable is always \(x_t\), and \(y^{\mathrm{prop}}_{t+1}\) serves only as a residual-corrected point at which the learned reverse map can be evaluated. The relation to classical second-order methods therefore arises from the induced first-order \(x\)-space behavior of \eqref{eq:x_prop}, not from a separate optimization problem in \(y\). In particular, the proposal should be interpreted as a structured pullback of a residual correction, not as a reparameterization of the inverse problem itself.

Algorithm~\ref{alg:dipg} summarizes the inference-time update. The Jacobian Composition Penalty (JCP), defined in Section~\ref{sec:local_inverse}, is a training-time penalty only; it is not evaluated during D-IPG inference and therefore adds no runtime cost. Because the proposal is generated through an amortized reverse map, Armijo backtracking is used as an acceptance safeguard rather than as a guarantee that sufficiently small \(\alpha\) is always accepted.\footnote{Appendix~\ref{app:armijo_amortized} discusses this distinction.}

The algorithm preserves the outer structure of a safeguarded nonlinear solver: the learned reverse map supplies proposals, while acceptance remains tied to the explicit objective \(\Phi\). Thus learned inverse geometry and descent control remain separated.

\subsection{Jacobian Composition Penalty (JCP)}
\label{sec:local_inverse}

The value of the reverse map \(g_V\) does not lie in exact inversion of the full forward surrogate. In nonlinear inverse problems, global inversion is generally unavailable, and even local one-shot inversion is not the operative requirement for iterative optimization. What the solver needs is a reusable local operator that converts measurement-space residual information into informative latent-space corrections. Thus the relevant object is not only the function value \(g_V(y)\), but its Jacobian \(J_g(y):=\nabla_y g_V(y)\).

The central differential quantity is the Jacobian composition
\begin{equation}
J_g(f_W(x))\,J_f(x).
\label{eq:jac_comp}
\end{equation}
When \eqref{eq:jac_comp} is close to the identity on \(\mathbb{R}^{d_{\mathrm{in}}}\), the learned reverse Jacobian acts as an approximate local left inverse of the forward Jacobian: infinitesimal perturbations in latent space are approximately preserved after forward propagation and reverse mapping. This condition is local and differential; it does not assert that \(g_V\) globally inverts \(f_W\), but rather that its Jacobian carries the first-order inverse action needed for iterative inverse optimization.

We promote this structure through a Jacobian Composition Penalty (JCP). Let \(\xi\in\mathbb{R}^{d_{\mathrm{in}}}\) be a random probe vector with i.i.d.\ Rademacher or standard normal entries, so that \(\mathbb{E}[\xi\xi^\top]=I\). By Hutchinson's estimator \citep{hutchinson1990stochastic},
\[
\mathbb{E}_{\xi}\|A\xi\|_2^2=\|A\|_F^2,
\]
which motivates
\begin{equation}
\mathcal{L}_{\mathrm{JCP}}
=
\mathbb{E}_{x,\xi}
\bigl\|
J_g(f_W(x))J_f(x)\xi-\xi
\bigr\|_2^2.
\label{eq:jcp}
\end{equation}
JCP is therefore not merely a representation regularizer: it is a probe-based objective that directly penalizes the local differential composition defect \(J_g(f_W(x))J_f(x)-I\), without explicitly forming the full Jacobian matrices.

When \(J_f(x)\) has full column rank, small composition defect implies that \(J_g(f_W(x))\) acts as an approximate left inverse of \(J_f(x)\). In the same full-column-rank least-squares setting, the Gauss--Newton direction is \(-J_f(x)^+r(x)\), where \(J_f(x)^+\) denotes the Moore--Penrose pseudoinverse \citep{penrose1955generalized}. Thus, when the learned reverse Jacobian approximates this local inverse action on the relevant residual directions, the induced D-IPG update becomes Gauss--Newton-like.

The same composition error defines a runtime diagnostic,
\begin{equation}
\mathrm{RJCP}(x)
:=
\mathbb{E}_{\xi}
\bigl\|
J_g(f_W(x))J_f(x)\xi-\xi
\bigr\|_2^2,
\label{eq:rjcp}
\end{equation}
which measures whether the learned local inverse geometry remains trustworthy along the optimization trajectory. JCP and RJCP are therefore two manifestations of the same geometric quantity: the former is minimized during training, while the latter monitors the achieved composition defect at inference time.

Finally, \eqref{eq:jcp} controls the average Jacobian-composition defect exactly in Frobenius norm. Writing
\[
A(x):=J_g(f_W(x))J_f(x)-I,
\]
Hutchinson's identity gives
\begin{equation}
\mathcal{L}_{\mathrm{JCP}}
=
\mathbb{E}_{x,\xi}\|A(x)\xi\|_2^2
=
\mathbb{E}_{x}\|A(x)\|_F^2.
\label{eq:jcp_frobenius}
\end{equation}
Since \(\|A(x)\|_2\le \|A(x)\|_F\), small JCP also upper-controls the average spectral composition defect appearing in the Gauss--Newton deviation bound of Section~\ref{sec:theory}.

\subsection{Practical realization}
\label{sec:practical_realization}

The framework is agnostic to the parameterization of \(f_W\) and \(g_V\). In low-dimensional settings, these maps may be represented by lightweight multilayer perceptrons; in spatial settings, they may be represented by shallow convolutional networks. What matters is not the architecture itself, but that \(f_W\) exposes the local forward geometry of the surrogate inverse problem and that \(g_V\) is trained to provide reverse differential action adapted to that geometry.

A useful training template is
\begin{equation}
\mathcal{L}
=
\lambda_{\mathrm{task}}\|f_W(x)-y\|_2^2
+
\lambda_{\mathrm{rec}}\|g_V(f_W(x))-x\|_2^2
+
\lambda_{\mathrm{cyc}}\|f_W(g_V(\widetilde y))-\widetilde y\|_2^2
+
\lambda_{\mathrm{JCP}}\,\mathcal{L}_{\mathrm{JCP}}
+
\mathcal{R}(W,V),
\label{eq:training_objective}
\end{equation}
where \((x,y)\) are paired latent-observation training samples, \(\widetilde y\) denotes measurement-space samples used for cycle consistency, and \(\mathcal{R}(W,V)\) collects optional stabilization terms. The exact coefficients, schedules, and active loss terms are reported in Appendix~\ref{app:experimental_setup} and Table~\ref{tab:hyperparameters}. Across benchmarks, the common design principle is fixed: task fit trains the forward surrogate, reconstruction and cycle-consistency terms anchor the reverse map when used, and JCP promotes the local differential consistency needed for inverse preconditioning.

All Jacobian-dependent quantities are implemented through automatic differentiation primitives rather than explicit Jacobian formation. JVPs compute \(J_f(x)\xi\) and \(J_g(f_W(x))(J_f(x)\xi)\), while reverse-mode differentiation backpropagates the resulting probe-based losses. This keeps JCP and RJCP substantially cheaper than explicit curvature-based linear solves and makes the method practical across both MLP- and CNN-based instantiations.

The relation to classical second-order optimization emerges from the induced first-order behavior of the D-IPG proposal. Expanding \(g_V\) around \(y_t=f_W(x_t)\) gives
\begin{equation}
g_V(y_t-\alpha_t r_t)
=
g_V(y_t)-\alpha_t J_g(y_t)r_t + O(\alpha_t^2\|r_t\|_2^2),
\label{eq:taylor}
\end{equation}
so that, under local consistency \(g_V(f_W(x_t))\approx x_t\),
\begin{equation}
x_{t+1}^{\mathrm{prop}}
\approx
x_t-\alpha_t J_g(f_W(x_t))r_t.
\label{eq:induced_step}
\end{equation}
Thus the implemented proposal induces an \(x\)-space preconditioned update whose preconditioning action is supplied by the learned reverse Jacobian. This first-order expansion is the starting point for the Gauss--Newton comparison in Section~\ref{sec:theory}: if \(J_g(f_W(x_t))\) approximates the local pseudoinverse action of \(J_f(x_t)\), then the D-IPG proposal approximates a damped Gauss--Newton step.

\section{Theory}
\label{sec:theory}

This section formalizes the local relation between D-IPG and Gauss--Newton. We show that the D-IPG proposal recovers a damped Gauss--Newton step to first order under local pseudoinverse consistency, and bound its deviation by the Jacobian-composition defect promoted by JCP. Proofs are provided in Appendix~\ref{app:proofs}.

\paragraph{First-order relation.}
\begin{theorem}[Local first-order equivalence to damped Gauss--Newton]
\label{thm:first_order_gn}
Let
\[
\Phi(x)=\frac{1}{2}\|f_W(x)-y^\star\|_2^2,
\qquad
y_t=f_W(x_t),
\qquad
r_t=y_t-y^\star,
\]
and consider the D-IPG proposal
\[
x_{t+1}^{\mathrm{prop}} = g_V(y_t-\alpha_t r_t).
\]
Assume that \(f_W\) is continuously differentiable near \(x_t\), that \(g_V\) is twice continuously differentiable near \(y_t\), and that \(J_f(x_t)\) has full column rank. Suppose further that
\[
g_V(f_W(x_t)) = x_t
\qquad\text{and}\qquad
J_g(f_W(x_t)) = J_f(x_t)^+ + E_t,
\]
where \(E_t\) is a perturbation matrix. Then
\begin{equation}
x_{t+1}^{\mathrm{prop}}
=
x_t-\alpha_t J_f(x_t)^+r_t-\alpha_t E_t r_t
+
O(\alpha_t^2\|r_t\|_2^2).
\label{eq:first_order_expansion}
\end{equation}
Consequently, the first-order discrepancy from the damped Gauss--Newton proposal satisfies
\[
\left\|
x_{t+1}^{\mathrm{prop}}
-
\left(x_t-\alpha_t J_f(x_t)^+r_t\right)
\right\|_2
\le
\alpha_t\|E_t r_t\|_2
+
O(\alpha_t^2\|r_t\|_2^2).
\]
Thus, when the learned inverse error \(E_t r_t\) is small on the residual direction, the D-IPG proposal is a first-order approximation to the damped Gauss--Newton step at \(x_t\).
\end{theorem}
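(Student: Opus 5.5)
The plan is a second-order Taylor expansion of \(g_V\) about \(y_t\), followed by substituting the two structural assumptions. Set \(h:=-\alpha_t r_t\). Because \(g_V\) is \(C^2\) on a neighbourhood of \(y_t\), we can fix a closed ball \(B(y_t,\delta)\) inside that neighbourhood. On this ball the Hessian of \(g_V\) is continuous on a compact set and hence bounded by some \(M\). Whenever \(\|h\|_2=\alpha_t\|r_t\|_2\le\delta\), Taylor's theorem with integral (or mean-value) remainder, applied componentwise, gives
\[
g_V(y_t+h)=g_V(y_t)+J_g(y_t)h+R(h),
\qquad
\|R(h)\|_2\le \tfrac{M}{2}\|h\|_2^2=\tfrac{M}{2}\alpha_t^2\|r_t\|_2^2 .
\]
This bound makes the \(O(\alpha_t^2\|r_t\|_2^2)\) notation precise. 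The constant depends only on \(g_V\) near \(y_t\), and the estimate holds uniformly once \(\alpha_t\|r_t\|_2\) is small enough.

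Next we substitute the hypotheses. Since \(y_t=f_W(x_t)\), the assumption \(g_V(f_W(x_t))=x_t\) gives \(g_V(y_t)=x_t\). The decomposition \(J_g(y_t)=J_f(x_t)^++E_t\) gives \(J_g(y_t)h=-\alpha_t J_f(x_t)^+r_t-\alpha_t E_t r_t\). Inserting both into the expansion yields exactly \eqref{eq:first_order_expansion}.

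For the discrepancy bound, we subtract the damped Gauss--Newton proposal \(x_t-\alpha_t J_f(x_t)^+r_t\). What remains is \(-\alpha_t E_t r_t+R(h)\). The triangle inequality then gives \(\alpha_t\|E_t r_t\|_2+\|R(h)\|_2\), and the remainder bound above completes the estimate.

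Two hypotheses play only interpretive roles. Full column rank of \(J_f(x_t)\) is what makes \(-J_f(x_t)^+r_t\) the Gauss--Newton direction \(-(J_f^\top J_f)^{-1}J_f^\top r_t\), which justifies calling the comparison step damped Gauss--Newton. The \(C^1\) assumption on \(f_W\) is needed only so that \(J_f(x_t)\) and its pseudoinverse are defined. Neither enters the estimate itself.

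There is no real obstacle. The only point needing care is stating the \(O(\cdot)\) term honestly: the constant comes from the local Hessian bound of \(g_V\), and the bound is valid only in the regime where \(y_t-\alpha_t r_t\) stays within the \(C^2\) neighbourhood. I would state this regime explicitly rather than leave it implicit in the notation.
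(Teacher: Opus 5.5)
Your proposal is correct and follows the paper's own proof. Both use a second-order Taylor expansion of \(g_V\) at \(y_t\), then substitute \(g_V(f_W(x_t))=x_t\) and \(J_g(f_W(x_t))=J_f(x_t)^++E_t\), then apply the triangle inequality; your Hessian bound on a closed ball (constant \(M/2\), valid when \(\alpha_t\|r_t\|_2\le\delta\)) just makes explicit the local constant \(C_t\) that the paper leaves implicit.
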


Theorem~\ref{thm:first_order_gn} identifies the bridge between the learned solver and classical nonlinear least-squares. Although the proposal is written in measurement space, its first-order action is an \(x\)-space update preconditioned by the reverse Jacobian. When that Jacobian approximates the Moore--Penrose pseudoinverse of the forward Jacobian on the residual direction, D-IPG recovers the damped Gauss--Newton direction up to the explicit learned-inverse error and higher-order terms. The closest classical analogue is damped Gauss--Newton or LM at the level of direction geometry; the distinction is computational, since LM reconstructs and solves a damped linearized system at each iteration, whereas D-IPG amortizes the inverse action into \(g_V\).

\paragraph{Deviation bound.}
\begin{theorem}[Local deviation from damped Gauss--Newton]
\label{thm:deviation_gn}
Let
\[
J := J_f(x_t),
\qquad
G := J_g(f_W(x_t)),
\qquad
r_t := f_W(x_t)-y^\star,
\]
and assume that \(J\) has full column rank. Define the induced first-order D-IPG direction and the damped Gauss--Newton direction by
\[
\Delta x_{\mathrm{DIPG}} := -\alpha_t G r_t,
\qquad
\Delta x_{\mathrm{GN},\alpha} := -\alpha_t J^+ r_t.
\]
Assume further that the residual component under consideration lies in the range of \(J\), i.e.,
\[
r_t = Ju
\qquad
\text{for some } u\in\mathbb{R}^{d_{\mathrm{in}}}.
\]
Then
\begin{equation}
\|\Delta x_{\mathrm{DIPG}}-\Delta x_{\mathrm{GN},\alpha}\|_2
\le
\alpha_t
\frac{\|GJ-I\|_2}{\sigma_{\min}(J)}
\,
\|r_t\|_2,
\label{eq:main_deviation_bound}
\end{equation}
where \(\sigma_{\min}(J)\) is the smallest singular value of \(J\). Equivalently,
\begin{equation}
\|\Delta x_{\mathrm{DIPG}}-\Delta x_{\mathrm{GN},\alpha}\|_2
\le
\alpha_t
\frac{\|J_g(f_W(x_t))J_f(x_t)-I\|_2}{\sigma_{\min}(J_f(x_t))}
\,
\|r_t\|_2.
\label{eq:main_deviation_bound_expanded}
\end{equation}
\end{theorem}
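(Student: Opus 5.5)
The bound follows from a short linear-algebra computation built on the identity \(J^+J=I\), which holds because \(J\) has full column rank. I would first write the difference of directions as
\[
\Delta x_{\mathrm{DIPG}}-\Delta x_{\mathrm{GN},\alpha}=-\alpha_t\,(G-J^+)\,r_t .
\]
Substituting the range assumption \(r_t=Ju\) gives \((G-J^+)Ju = GJu-J^+Ju\). Since \(J^+J=I\) under full column rank, \(J^+ = (J^\top J)^{-1}J^\top\), and this simplifies to \((GJ-I)u\). So the deviation equals \(-\alpha_t(GJ-I)u\). This exhibits it as exactly the composition defect \(A(x_t)=J_g(f_W(x_t))J_f(x_t)-I\) from Section~\ref{sec:local_inverse}, applied to the latent preimage of the residual.

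The second step is to control \(\|u\|_2\) by \(\|r_t\|_2\). Full column rank makes \(J\) injective, so \(u\) is uniquely determined by \(r_t\) and equals \(J^+r_t\). The singular value decomposition then gives \(\|r_t\|_2=\|Ju\|_2\ge\sigma_{\min}(J)\|u\|_2\), with \(\sigma_{\min}(J)>0\). Hence \(\|u\|_2\le\|r_t\|_2/\sigma_{\min}(J)\), which is also the statement \(\|J^+\|_2=1/\sigma_{\min}(J)\).

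Combining the two steps with submultiplicativity of the induced spectral norm gives
\[
\|\Delta x_{\mathrm{DIPG}}-\Delta x_{\mathrm{GN},\alpha}\|_2\le\alpha_t\|GJ-I\|_2\|u\|_2\le\alpha_t\frac{\|GJ-I\|_2}{\sigma_{\min}(J)}\|r_t\|_2 ,
\]
which is \eqref{eq:main_deviation_bound}. The expanded form \eqref{eq:main_deviation_bound_expanded} is the same inequality with the definitions of \(J\) and \(G\) substituted back in. This uses \(\alpha_t>0\), which holds since \(\alpha_t\) is a backtracked step from \(\alpha_0>0\).

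No step here is a genuine obstacle. The only point needing care is the role of the range assumption \(r_t=Ju\). Without it, \(r_t\) may have a component orthogonal to \(\operatorname{range}(J)\). On that component \(J^+\) vanishes but \(G\) need not, so the defect \(GJ-I\) alone cannot control it. I would therefore state explicitly that the bound concerns the range component and leave the orthogonal part to a separate term \(\|G(I-JJ^+)r_t\|_2\) if a general version were wanted. Combining with Theorem~\ref{thm:first_order_gn}, where \(E_t=G-J^+\), this shows \(\|E_tr_t\|_2\) there is bounded by the same right-hand side. Via \eqref{eq:jcp_frobenius} and \(\|A\|_2\le\|A\|_F\), the JCP objective then controls the deviation.
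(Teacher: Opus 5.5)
Your proposal is correct and follows essentially the same argument as the paper: substitute \(r_t=Ju\) and use \(J^+J=I\) to reduce the deviation to \(\alpha_t(I-GJ)u\), then bound it by submultiplicativity together with \(\|u\|_2\le\|r_t\|_2/\sigma_{\min}(J)\). Your remark on the component of \(r_t\) orthogonal to \(\operatorname{range}(J)\) also matches the paper's separate treatment in Proposition~\ref{prop:full_residual}.
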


Theorem~\ref{thm:deviation_gn} exposes the three controlling quantities: composition error, residual magnitude, and local conditioning. In the exact-consistency case \(J_g(f_W(x_t))J_f(x_t)=I\), the bound collapses to zero on the range of the local Jacobian.

\begin{corollary}[Exact local agreement]
\label{cor:exact}
Under the assumptions of Theorem~\ref{thm:deviation_gn}, if
\[
J_g(f_W(x_t))J_f(x_t)=I,
\]
then
\[
\Delta x_{\mathrm{DIPG}}=\Delta x_{\mathrm{GN},\alpha}
\]
for every residual \(r_t\in\mathrm{Range}(J_f(x_t))\).
\end{corollary}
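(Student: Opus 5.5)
The plan is to obtain Corollary~\ref{cor:exact} as the zero-defect case of Theorem~\ref{thm:deviation_gn}, and then to confirm it with a short direct computation that uses no norm estimates. Both routes rest on the same fact. When \(J:=J_f(x_t)\) has full column rank, \(J^+=(J^\top J)^{-1}J^\top\) is a left inverse of \(J\), so \(J^+J=I\). In addition, \(\sigma_{\min}(J)>0\), so the ratio in \eqref{eq:main_deviation_bound} is finite.

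\emph{Route via the theorem.} Fix any \(r_t\in\mathrm{Range}(J)\) and write \(r_t=Ju\). The hypothesis \(GJ=I\), with \(G:=J_g(f_W(x_t))\), gives \(\|GJ-I\|_2=0\). Substituting into \eqref{eq:main_deviation_bound} yields
\[
\|\Delta x_{\mathrm{DIPG}}-\Delta x_{\mathrm{GN},\alpha}\|_2\le \alpha_t\cdot\frac{0}{\sigma_{\min}(J)}\,\|r_t\|_2=0 .
\]
The left-hand side is a nonnegative norm bounded above by zero, so it equals zero, and hence the two directions coincide. The residual \(r_t\) in \(\mathrm{Range}(J)\) was arbitrary, so the identity holds for every such residual.

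\emph{Direct check.} With \(r_t=Ju\),
\[
\Delta x_{\mathrm{DIPG}}=-\alpha_t G J u=-\alpha_t u
\qquad\text{and}\qquad
\Delta x_{\mathrm{GN},\alpha}=-\alpha_t J^+ J u=-\alpha_t u .
\]
The first equality uses the hypothesis \(GJ=I\), and the second uses the left-inverse property \(J^+J=I\). This check is worth including because it makes clear that the range condition on \(r_t\) is essential. Off the range, \(G\) and \(J^+\) may act differently, since \(G\) is only constrained on \(\mathrm{Range}(J)\).

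I do not expect a real obstacle here. The only point to state explicitly is that full column rank gives both \(J^+J=I\) and \(\sigma_{\min}(J)>0\), which justify the direct computation and the finiteness of the bound, respectively.
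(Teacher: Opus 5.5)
Your proposal is correct, and your first route is exactly the paper's proof: you substitute \(J_g(f_W(x_t))J_f(x_t)=I\) into the bound of Theorem~\ref{thm:deviation_gn}, which makes a nonnegative norm bounded above by zero. Your direct check is also sound; it is just the identity \(\Delta x_{\mathrm{DIPG}}-\Delta x_{\mathrm{GN},\alpha}=\alpha_t(I-GJ)u\) from the paper's proof of that theorem, evaluated at \(GJ=I\).
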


\paragraph{Interpretation.}
In well-conditioned regimes, closeness to Gauss--Newton is the desired behavior: the local inverse problem is stable, the Gauss--Newton direction is informative, and small Jacobian-composition error implies that D-IPG inherits this trusted local geometry. The comparison becomes more delicate when the forward Jacobian is poorly conditioned, since the deviation bound scales like
\[
\frac{\|J_g(f_W(x_t))J_f(x_t)-I\|_2\,\|r_t\|_2}{\sigma_{\min}(J_f(x_t))}.
\]
Thus even modest composition error can induce larger directional discrepancy when \(\sigma_{\min}(J_f(x_t))\) is small. This need not be interpreted as a defect: in ill-conditioned regimes, controlled deviation from Gauss--Newton can act as an implicit regularization of unstable directions rather than a failure to approximate them.

The range-restricted form in Theorem~\ref{thm:deviation_gn} is natural for local nonlinear least-squares analysis, since Gauss--Newton acts on the component of the residual explained by the Jacobian linearization. A corresponding full-residual decomposition is given as Proposition~\ref{prop:full_residual} in Appendix~\ref{app:proofs}, separating \(r_t\) into a recoverable component \(r_{\parallel}\) and an orthogonal component \(r_{\perp}\). This distinction is also why the theory controls the latent-space composition \(GJ\), rather than requiring \(JG\approx I\) on all measurement-space residuals, since even exact Gauss--Newton only acts on the component of the residual explained by the local Jacobian.

Finally, JCP controls the same geometric quantity that appears in the Gauss--Newton deviation bound. Indeed, for
\[
A(x):=J_g(f_W(x))J_f(x)-I,
\]
the probe objective satisfies
\[
\mathcal{L}_{\mathrm{JCP}}
=
\mathbb{E}_{x,\xi}\|A(x)\xi\|_2^2
=
\mathbb{E}_{x}\|A(x)\|_F^2,
\]
and hence upper-controls \(\mathbb{E}_{x}\|A(x)\|_2^2\). Lemma~\ref{lem:jcp_composition} in Appendix~\ref{app:jcp_composition_lemma} gives the proof. In this sense, the training objective, the runtime diagnostic RJCP, and the Gauss--Newton deviation bound all track the same local inverse-consistency error.

Taken together, these results place D-IPG on the same local geometric footing as Gauss--Newton without requiring exact agreement with it. JCP promotes local inverse consistency during training, RJCP measures that consistency at runtime, and the deviation bounds clarify how the quality of the learned geometry interacts with conditioning and residual structure.

\section{Experiments}
\label{sec:exp}

We evaluate D-IPG on seven inverse-problem benchmarks spanning linear diffusion, nonlinear reaction--diffusion, elliptic inversion, advection--diffusion, and fluid dynamics. The main text focuses on three representative problems, Heat-3D, Advection--Diffusion-2D, and Allen--Cahn-2D, for which we show trajectory-level comparisons and pooled solver profiles. The complete seven-problem benchmark table, including the known Heat-1D failure mode that limits the full pooled asymptote, together with additional problem-specific summaries, is deferred to Appendix~\ref{app:extended_eval}. All methods are compared on matched instance sets under identical stopping budgets.

Figure~\ref{fig:performance} evaluates solver efficiency under the normalized residual tolerance
\[
r_t/r_0 \le 0.30,
\]
which is appropriate for time-to-tolerance and performance-profile comparisons. Figure~\ref{fig:mechanism} studies basin occupancy on Allen--Cahn-2D using the criterion
\[
\min_t \mathrm{RMSE}_t < 0.095,
\]
which isolates whether a trial reaches the low-error reconstruction basin. Figure~\ref{fig:reliability} reports task-level success rates under task-specific RMSE thresholds, and therefore measures reconstruction quality rather than optimization progress.

\subsection{Representative solver performance}

Figure~\ref{fig:performance} summarizes the main performance result on three representative PDE families. On Heat-3D, D-IPG and LM both descend rapidly within the first six iterations, whereas GN stalls at an elevated error plateau after an initially favorable descent. On Advection--Diffusion-2D, D-IPG reaches the residual tolerance in a median of 7 iterations with mean wall-clock time \(0.101\)s, while GN requires 33 median iterations and \(18.5\)s. On Allen--Cahn-2D, D-IPG again reaches tolerance in 6 median iterations and \(0.150\)s; GN attains the same iteration count but requires \(2.77\)s, while L-BFGS reaches low error only after substantially more iterations.

The bottom row of Figure~\ref{fig:performance} pools all 240 instances from these three representative problems into Dolan--Mor\'e profiles and a data profile \citep{dolan2002benchmarking}. The wall-clock profile shows that D-IPG is the fastest solver on nearly half of all pooled instances at performance ratio \(\tau=1\), and is the only method whose curve reaches \(1.0\) within the plotted range. The iteration profile shows a corresponding advantage in iteration efficiency, while the data profile shows that D-IPG reaches \(50\%\) solved at roughly \(10^{-1}\)s, whereas GN does not reach the same point until budgets exceeding \(1\)s. Thus, on the pooled representative benchmark, D-IPG achieves the best overall profile across wall-clock time, iteration count, and budgeted solve rate.

\begin{figure}[t]
    \centering
    \includegraphics[width=0.86\columnwidth]{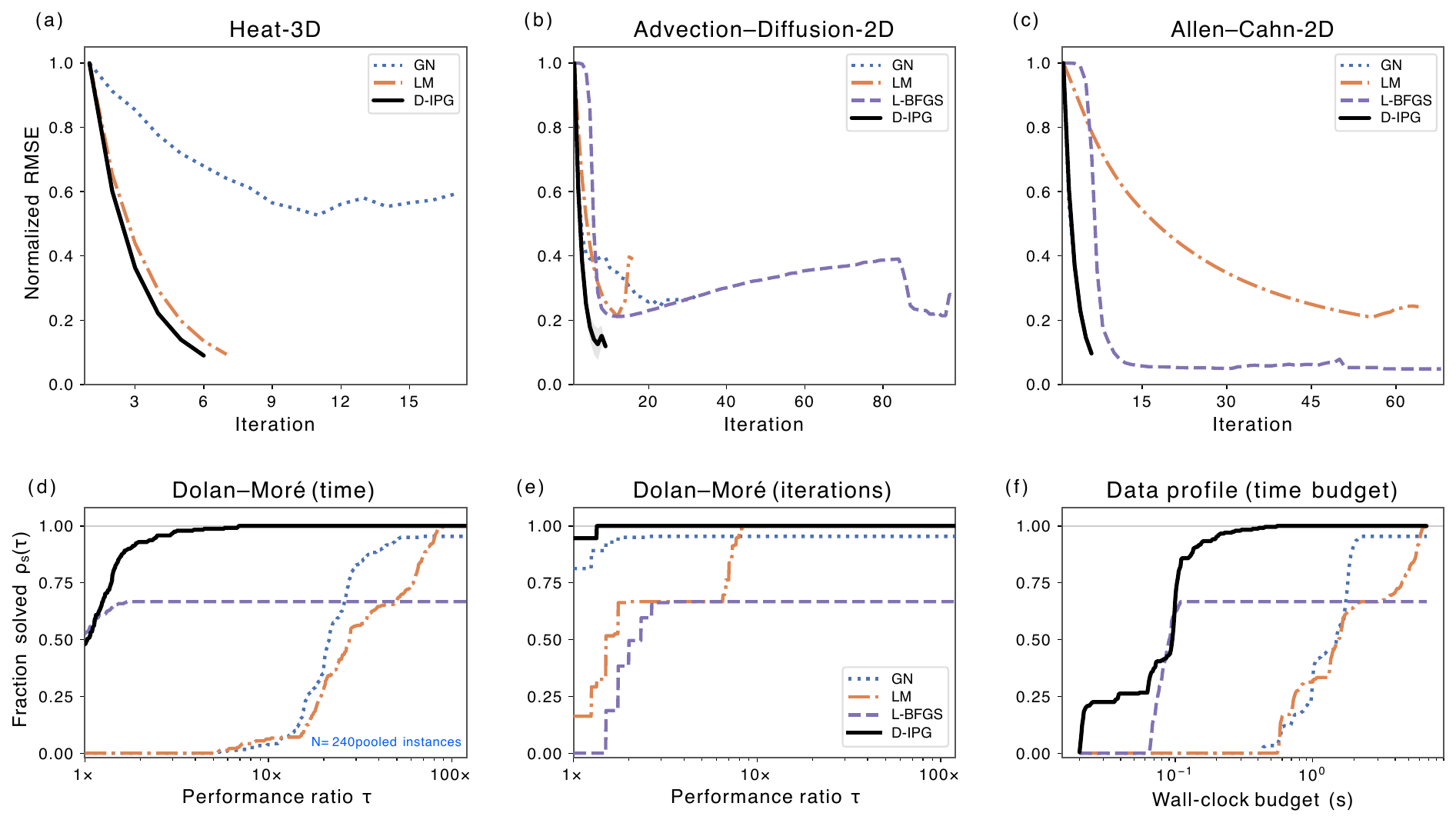}
    \caption{Representative performance on three PDE inverse problems. Top row: normalized RMSE trajectories, computed per trial as \(\mathrm{RMSE}_t/\mathrm{RMSE}_1\) and then averaged. Bottom row: pooled Dolan--Mor\'e performance profiles on wall-clock time and iteration count, together with a data profile over absolute wall-clock budgets. A trial is declared solved when \(r_t/r_0 \le 0.30\), so the pooled profiles measure time-to-tolerance rather than reconstruction quality.}
    \label{fig:performance}
\end{figure}

\subsection{Mechanism of JCP}

Figure~\ref{fig:mechanism} isolates the role of JCP on Allen--Cahn-2D. Panel~(a) plots the separation between converging and non-converging D-IPG\((-\mathrm{JCP})\) trials using Cohen's \(d\) \citep{cohen1988statistical}, with convergence defined by \(\min_t \mathrm{RMSE}_t < 0.095\). The key observation is that the \(-\mathrm{JCP}\) trials remain only weakly separated through the first five iterations, but split sharply at the final step: a small subset enters the low-error basin and continues contracting, while most trials settle into a higher-error basin. Panel~(b) shows that JCP largely suppresses this failure mode: the good basin is occupied by \(82.5\%\) of D-IPG\((+\mathrm{JCP})\) trials but only \(18.75\%\) of D-IPG\((-\mathrm{JCP})\) trials. Panel~(c) conditions on the successful D-IPG\((-\mathrm{JCP})\) subset and shows that, once \(-\mathrm{JCP}\) reaches the correct basin, its local contraction behavior is already close to that of D-IPG\((+\mathrm{JCP})\). Thus JCP improves reliability primarily by increasing access to the good basin, rather than by changing the local contraction dynamics after that basin is reached.

\begin{figure}[t]
    \centering
    \includegraphics[width=0.90\columnwidth]{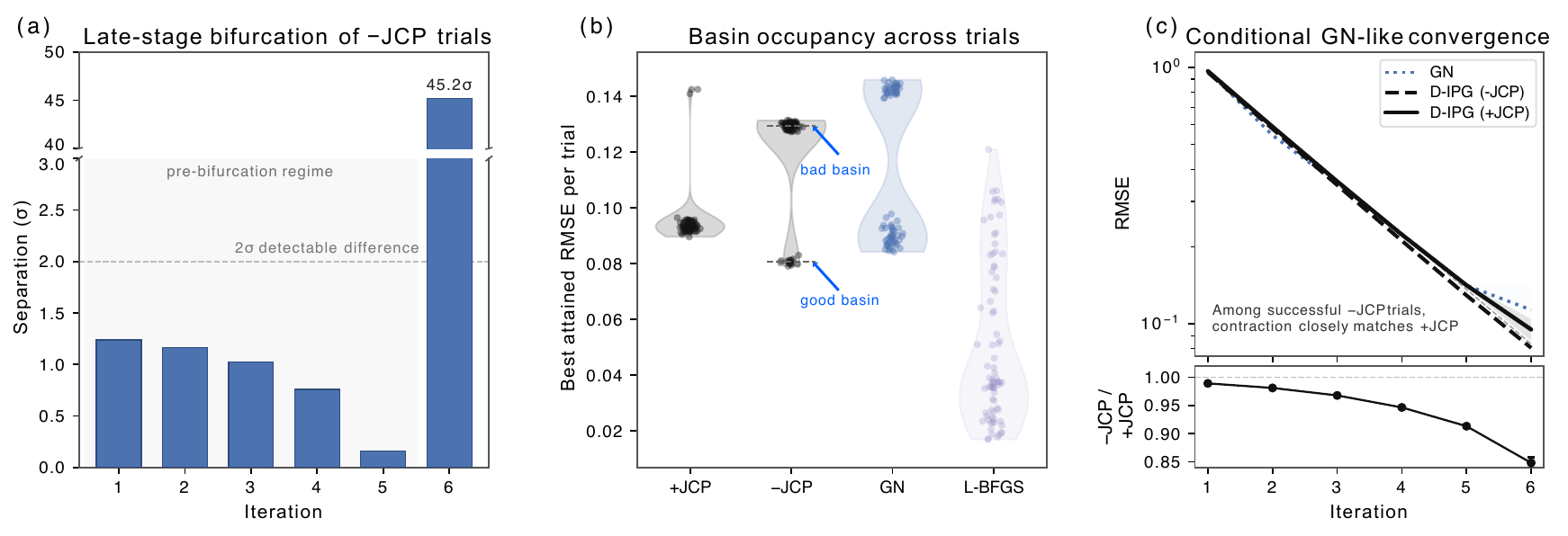}
    \caption{Mechanism study on Allen--Cahn-2D. Panel~(a) shows the iteration-wise separation between converging and non-converging D-IPG\((-\mathrm{JCP})\) trials, measured by Cohen's \(d\) under the basin criterion \(\min_t \mathrm{RMSE}_t < 0.095\). Panel~(b) shows the resulting basin occupancy across trials, with the same threshold defining the low-error basin. Panel~(c) compares D-IPG\((+\mathrm{JCP})\), GN, and the subset of D-IPG\((-\mathrm{JCP})\) trials that reach the good basin; the ratio strip reports the mean RMSE ratio of successful D-IPG\((-\mathrm{JCP})\) trials relative to D-IPG\((+\mathrm{JCP})\).}
    \label{fig:mechanism}
\end{figure}

\subsection{Reliability across the broader problem spectrum}

Figure~\ref{fig:reliability} studies reliability over six PDE inverse problems, ordered by the D-IPG\((+\mathrm{JCP})\) RJCP diagnostic as a descriptive proxy for learned local inverse quality. Each cell reports the task-specific RMSE-threshold success rate together with mean wall-clock time.

D-IPG attains \(100\%\) success on five of the six tested problems and \(69\%\) on Darcy-2D, producing an arithmetic mean success rate of \(94.8\%\) across the full benchmark. No baseline exhibits comparable uniformity: GN achieves only \(17.3\%\) average success, LM \(65.5\%\), and L-BFGS remains unreliable on several harder problems. The main exception, Darcy-2D, is consistent with Theorem~\ref{thm:deviation_gn}: Darcy inversion is an ill-conditioned elliptic problem, and its achieved RJCP is among the largest in the benchmark, so composition error and conditioning can jointly amplify deviation from the Gauss--Newton direction. Thus Darcy-2D is not a contradiction of the mechanism, but the regime where the theory predicts reduced reliability.

Crucially, on Navier--Stokes-2D, D-IPG maintains a \(100\%\) success rate despite an update direction nearly orthogonal to the negative gradient (cosine \(\approx 0.085\); Appendix Figure~\ref{fig:appendix_a2}), directly ruling out the interpretation that the method merely recovers first-order descent. 

Taken together, the experiments show that D-IPG is the strongest solver on the representative benchmark under wall-clock, iteration, and budgeted profiles; that JCP improves reliability primarily through basin access and stabilization of learned inverse geometry; and that this advantage extends to a broader problem spectrum where D-IPG is more reliable than the classical baselines considered here.

\begin{figure}[t]
    \centering
    \includegraphics[width=0.80\columnwidth]{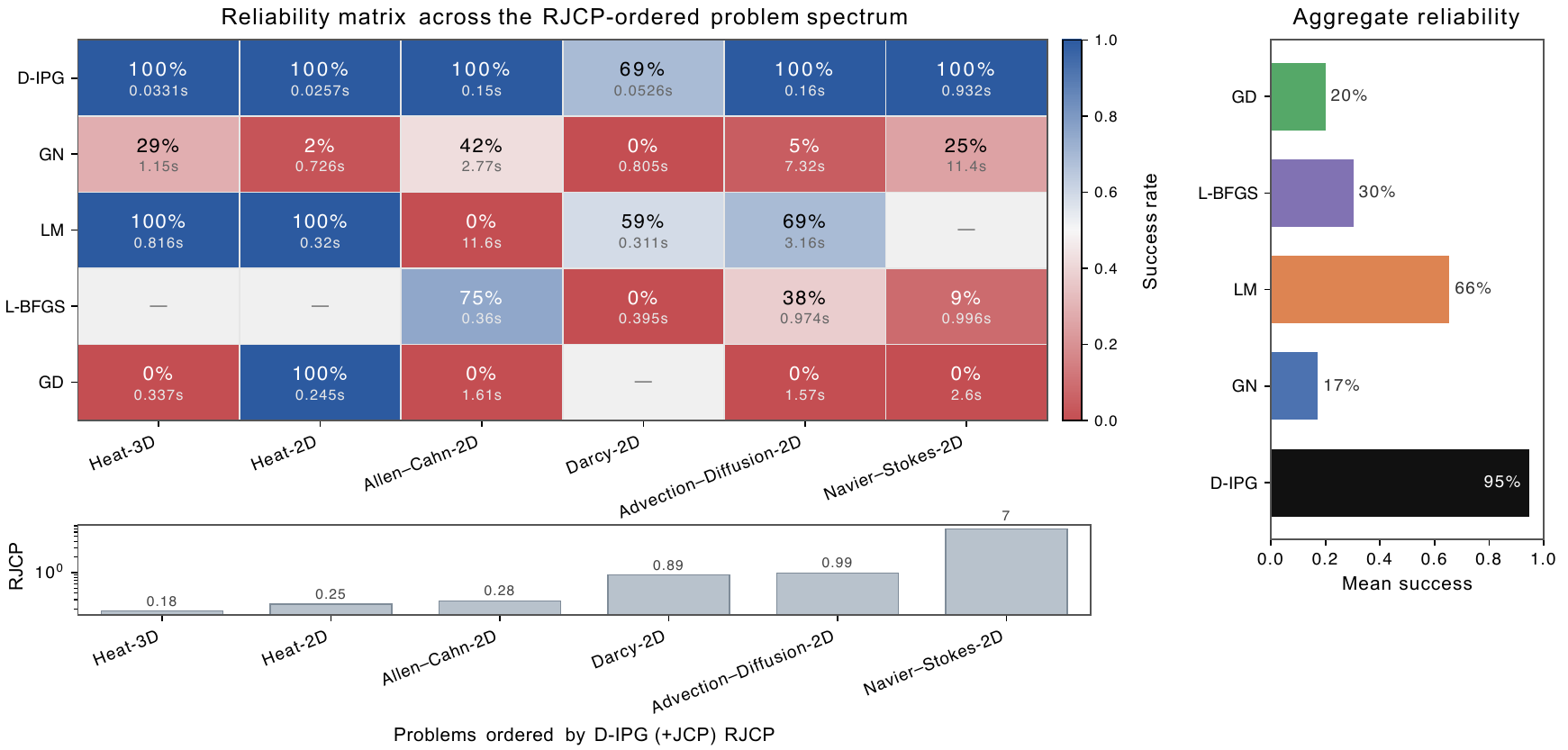}
    \caption{Reliability matrix across six PDE inverse problems. Columns are ordered by the D-IPG\((+\mathrm{JCP})\) RJCP diagnostic; the bottom strip reports the corresponding achieved RJCP values. Each heatmap cell shows the success rate under the task-specific RMSE quality threshold together with mean wall-clock time, while the right marginal reports the arithmetic mean success rate across problems.}
    \label{fig:reliability}
\end{figure}

\section{Conclusion}
\label{sec:conclusion}

This paper suggests that the expensive part of nonlinear inverse optimization is not only evaluating the forward model, but repeatedly rebuilding local inverse geometry. D-IPG shows that part of this geometry can be amortized into a learned reverse map and reused inside a safeguarded solver. In this view, JCP is the key structural ingredient: it trains the reverse Jacobian to carry local inverse action, allowing the resulting update to behave like an efficient Gauss--Newton-like proposal without solving a new Jacobian-based system at every iteration. The empirical results support this perspective across PDE inverse problems, while also showing that the advantage is strongest in amortized regimes where many instances share the same forward family.

\paragraph{Limitations.}
The method depends on the learned forward surrogate and reverse geometry: if \(f_W\) fails to expose reliable local inverse structure, as observed in the Heat-1D failure mode, neither JCP nor D-IPG can recover consistently reliable reverse updates. The Gauss--Newton interpretation is local and most meaningful when the surrogate linearization remains informative; global convergence would require additional descent-compatibility assumptions. JCP improves reliability mainly through local inverse consistency and basin access, but is not universally monotone across metrics or benchmarks. Finally, D-IPG is best suited to amortized settings with many inverse instances from the same forward family; for one-off problems, Jacobian-probe training may outweigh deployed solve-time gains.

\section*{Acknowledgments}
This work was conducted independently. Experiments were performed on Google Colab GPU instances. The author thanks the broader numerical-optimization and machine-learning research communities for the open literature that this work builds on.

\bibliographystyle{unsrtnat}
\bibliography{references}

\clearpage
\appendix

\section*{Code availability}
A code repository is available at
\url{https://github.com/AadityaKachhadiya/deceptron}.
The repository contains an installable PyTorch implementation of Deceptron/D-IPG, including the learned forward--reverse module, the D-IPG inference routine, and JCP/RJCP utilities. It also includes demonstration scripts, benchmark implementations for the full experimental suite, and figure-generation notebooks/scripts for the reported Dolan--Mor\'e profiles, basin analyses, reliability matrices, and appendix diagnostics. Setup instructions are provided, including editable installation via \texttt{pip install -e .}.

\section{Technical Proofs and Additional Theoretical Details}
\label{app:proofs}

This appendix provides the proofs of the main theoretical results in Section~\ref{sec:theory}. Throughout, vector norms are Euclidean and matrix norms are induced spectral norms unless stated otherwise.

\subsection{Proof of Theorem~\ref{thm:first_order_gn}}
\label{app:proof_first_order_gn}

\begin{proof}
Let \(y_t=f_W(x_t)\). By construction,
\[
x_{t+1}^{\mathrm{prop}} = g_V(y_t-\alpha_t r_t).
\]
Since \(g_V\) is \(C^2\) near \(y_t\), Taylor's theorem gives
\[
g_V(y_t-\alpha_t r_t)
=
g_V(y_t)-\alpha_t J_g(y_t)r_t+\varepsilon_t,
\]
where
\[
\|\varepsilon_t\|_2 \le C_t \alpha_t^2 \|r_t\|_2^2
\]
for some local constant \(C_t>0\). Since \(y_t=f_W(x_t)\), the local consistency assumption gives
\[
g_V(y_t)=g_V(f_W(x_t))=x_t.
\]
Using
\[
J_g(y_t)=J_g(f_W(x_t))=J_f(x_t)^+ + E_t,
\]
we obtain
\[
x_{t+1}^{\mathrm{prop}}
=
x_t-\alpha_t\bigl(J_f(x_t)^+ + E_t\bigr)r_t+\varepsilon_t.
\]
Therefore
\[
x_{t+1}^{\mathrm{prop}}
=
x_t-\alpha_t J_f(x_t)^+r_t-\alpha_t E_t r_t
+
O(\alpha_t^2\|r_t\|_2^2),
\]
which proves the main expansion.

Taking norms relative to the damped Gauss--Newton proposal gives
\[
\left\|
x_{t+1}^{\mathrm{prop}}
-
\left(x_t-\alpha_t J_f(x_t)^+r_t\right)
\right\|_2
\le
\alpha_t\|E_t r_t\|_2
+
\|\varepsilon_t\|_2.
\]
Using the Taylor remainder bound,
\[
\left\|
x_{t+1}^{\mathrm{prop}}
-
\left(x_t-\alpha_t J_f(x_t)^+r_t\right)
\right\|_2
\le
\alpha_t\|E_t r_t\|_2
+
O(\alpha_t^2\|r_t\|_2^2).
\]
Thus the D-IPG proposal agrees with the damped Gauss--Newton proposal up to the learned inverse error on the residual direction and higher-order Taylor terms.
\end{proof}

\subsection{Proof of Theorem~\ref{thm:deviation_gn}}
\label{app:proof_deviation_gn}

\begin{proof}
By definition,
\[
\Delta x_{\mathrm{DIPG}}-\Delta x_{\mathrm{GN},\alpha}
=
-\alpha_t G r_t + \alpha_t J^+ r_t
=
\alpha_t (J^+-G)r_t.
\]
Using \(r_t=Ju\), we obtain
\[
\Delta x_{\mathrm{DIPG}}-\Delta x_{\mathrm{GN},\alpha}
=
\alpha_t (J^+-G)Ju
=
\alpha_t (J^+J-GJ)u.
\]
Because \(J\) has full column rank,
\[
J^+J=I_{d_{\mathrm{in}}},
\]
so
\[
\Delta x_{\mathrm{DIPG}}-\Delta x_{\mathrm{GN},\alpha}
=
\alpha_t (I-GJ)u.
\]
Taking norms and using submultiplicativity gives
\begin{equation}
\|\Delta x_{\mathrm{DIPG}}-\Delta x_{\mathrm{GN},\alpha}\|_2
\le
\alpha_t \|I-GJ\|_2 \|u\|_2.
\label{eq:pre_sigma_bound_appendix}
\end{equation}
Since \(r_t=Ju\) and \(J\) has full column rank,
\[
\|Ju\|_2 \ge \sigma_{\min}(J)\|u\|_2,
\]
and therefore
\[
\|u\|_2 \le \frac{\|r_t\|_2}{\sigma_{\min}(J)}.
\]
Substituting this into \eqref{eq:pre_sigma_bound_appendix} gives
\[
\|\Delta x_{\mathrm{DIPG}}-\Delta x_{\mathrm{GN},\alpha}\|_2
\le
\alpha_t
\frac{\|I-GJ\|_2}{\sigma_{\min}(J)}
\,
\|r_t\|_2.
\]
Since \(\|I-GJ\|_2=\|GJ-I\|_2\), this proves
\[
\|\Delta x_{\mathrm{DIPG}}-\Delta x_{\mathrm{GN},\alpha}\|_2
\le
\alpha_t
\frac{\|GJ-I\|_2}{\sigma_{\min}(J)}
\,
\|r_t\|_2.
\]
Replacing \(J\) and \(G\) by \(J_f(x_t)\) and \(J_g(f_W(x_t))\) gives the expanded form.
\end{proof}

\subsection{Proof of Corollary~\ref{cor:exact}}
\label{app:proof_cor_exact}

\begin{proof}
Under the assumptions of Theorem~\ref{thm:deviation_gn}, the deviation bound gives
\[
\|\Delta x_{\mathrm{DIPG}}-\Delta x_{\mathrm{GN},\alpha}\|_2
\le
\alpha_t
\frac{\|J_g(f_W(x_t))J_f(x_t)-I\|_2}{\sigma_{\min}(J_f(x_t))}
\,
\|r_t\|_2.
\]
If \(J_g(f_W(x_t))J_f(x_t)=I\), then the numerator vanishes, hence
\[
\|\Delta x_{\mathrm{DIPG}}-\Delta x_{\mathrm{GN},\alpha}\|_2=0.
\]
Therefore
\[
\Delta x_{\mathrm{DIPG}}=\Delta x_{\mathrm{GN},\alpha}
\]
for every \(r_t\in \mathrm{Range}(J_f(x_t))\).
\end{proof}

\subsection{JCP as an estimator of composition defect}
\label{app:jcp_composition_lemma}

\begin{lemma}[JCP controls average composition defect]
\label{lem:jcp_composition}
Let
\[
A(x):=J_g(f_W(x))J_f(x)-I.
\]
Assume that the probe vector \(\xi\in\mathbb{R}^{d_{\mathrm{in}}}\) satisfies
\[
\mathbb{E}_{\xi}[\xi\xi^\top]=I.
\]
Then
\[
\mathbb{E}_{\xi}\|A(x)\xi\|_2^2=\|A(x)\|_F^2.
\]
Consequently,
\[
\mathcal{L}_{\mathrm{JCP}}
=
\mathbb{E}_{x}\|J_g(f_W(x))J_f(x)-I\|_F^2.
\]
In particular,
\[
\mathbb{E}_{x}\|J_g(f_W(x))J_f(x)-I\|_2^2
\le
\mathcal{L}_{\mathrm{JCP}}.
\]
\end{lemma}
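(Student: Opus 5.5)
The argument has three steps: the Hutchinson trace identity at a fixed \(x\), averaging over \(x\) to identify \(\mathcal{L}_{\mathrm{JCP}}\), and a pointwise spectral--Frobenius comparison.

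\textbf{Step 1: the identity at fixed \(x\).} Fix \(x\) and write \(A=A(x)\), which is a deterministic matrix once \(x\) is fixed. We have
\[
\|A\xi\|_2^2=\xi^\top A^\top A\,\xi=\operatorname{tr}\!\left(A^\top A\,\xi\xi^\top\right).
\]
By linearity of trace and expectation, together with \(\mathbb{E}_\xi[\xi\xi^\top]=I\),
\[
\mathbb{E}_\xi\|A\xi\|_2^2=\operatorname{tr}(A^\top A)=\|A\|_F^2.
\]
Only the second-moment assumption is used here; the distribution of \(\xi\) (Rademacher or Gaussian) is otherwise irrelevant.

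\textbf{Step 2: identifying \(\mathcal{L}_{\mathrm{JCP}}\).} The integrand in \eqref{eq:jcp} is exactly \(\|A(x)\xi\|_2^2\). I take \(\xi\) to be independent of \(x\), which is implicit in the probe construction. The joint expectation can then be computed iteratively, first over \(\xi\) conditional on \(x\) and then over \(x\). This step needs a remark on justification. The integrand is nonnegative, so Tonelli's theorem applies and no integrability hypothesis is required beyond measurability of \(x\mapsto A(x)\). That measurability follows from continuity of the Jacobians, under the differentiability already assumed for \(f_W\) and \(g_V\). Step 1 then gives
\[
\mathcal{L}_{\mathrm{JCP}}=\mathbb{E}_x\|A(x)\|_F^2.
\]

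\textbf{Step 3: the spectral bound.} For any matrix, \(\|A\|_2^2=\sigma_{\max}(A)^2\le\sum_i\sigma_i(A)^2=\|A\|_F^2\). Applying this pointwise and using monotonicity of expectation yields
\[
\mathbb{E}_x\|A(x)\|_2^2\le\mathcal{L}_{\mathrm{JCP}}.
\]

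None of these steps is a real obstacle. The only point needing care is the interchange of expectations in Step 2, and nonnegativity of the integrand handles it. If \(\mathcal{L}_{\mathrm{JCP}}=\infty\), the inequality in Step 3 is trivially true.
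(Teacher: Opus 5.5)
Your proof is correct and follows the paper's argument essentially verbatim: at fixed \(x\) you use the trace identity \(\mathbb{E}_\xi\|A\xi\|_2^2=\mathrm{tr}(A^\top A\,\mathbb{E}_\xi[\xi\xi^\top])=\|A\|_F^2\), then average over \(x\), then apply the pointwise bound \(\|A(x)\|_2\le\|A(x)\|_F\). Your Tonelli and independence remarks are extra rigor that the paper leaves implicit; one small correction is that measurability of \(x\mapsto A(x)\) holds for any differentiable \(f_W,g_V\), because Jacobian entries are pointwise limits of difference quotients, so you do not need the Jacobians to be continuous.
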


\begin{proof}
For fixed \(x\), write \(A=A(x)\). Then
\[
\mathbb{E}_{\xi}\|A\xi\|_2^2
=
\mathbb{E}_{\xi}\left[\xi^\top A^\top A\xi\right].
\]
Using the trace identity for scalars,
\[
\xi^\top A^\top A\xi
=
\mathrm{tr}\!\left(\xi^\top A^\top A\xi\right)
=
\mathrm{tr}\!\left(A^\top A\xi\xi^\top\right).
\]
Taking expectation gives
\[
\mathbb{E}_{\xi}\|A\xi\|_2^2
=
\mathrm{tr}\!\left(A^\top A\,\mathbb{E}_{\xi}[\xi\xi^\top]\right)
=
\mathrm{tr}(A^\top A)
=
\|A\|_F^2.
\]
Averaging over \(x\) yields the expression for \(\mathcal{L}_{\mathrm{JCP}}\). Finally, since
\[
\|A(x)\|_2\le \|A(x)\|_F
\]
for every \(x\),
\[
\mathbb{E}_{x}\|A(x)\|_2^2
\le
\mathbb{E}_{x}\|A(x)\|_F^2
=
\mathcal{L}_{\mathrm{JCP}}.
\]
\end{proof}

\subsection{Full residual decomposition}
\label{app:full_residual_decomposition}

\begin{proposition}[Full residual decomposition]
\label{prop:full_residual}
Let
\[
J:=J_f(x_t),
\qquad
G:=J_g(f_W(x_t)),
\qquad
r_t:=f_W(x_t)-y^\star,
\]
and assume that \(J\) has full column rank. For an arbitrary residual \(r_t\), decompose
\[
r_t = r_{\parallel}+r_{\perp},
\qquad
r_{\parallel}\in \mathrm{Range}(J),
\qquad
r_{\perp}\perp \mathrm{Range}(J).
\]
Then
\begin{equation}
\|\Delta x_{\mathrm{DIPG}}-\Delta x_{\mathrm{GN},\alpha}\|_2
\le
\alpha_t
\left(
\frac{\|GJ-I\|_2}{\sigma_{\min}(J)}\|r_{\parallel}\|_2
+
\|G\|_2\,\|r_{\perp}\|_2
\right).
\label{eq:full_residual_bound}
\end{equation}
\end{proposition}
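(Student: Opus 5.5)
The plan is to reduce to the range-restricted case of Theorem~\ref{thm:deviation_gn} by linearity, and then handle the orthogonal component separately. First observe that the pseudoinverse annihilates the orthogonal complement of the range. Since \(J\) has full column rank, \(J^+=(J^\top J)^{-1}J^\top\). Hence \(J^+r_\perp = (J^\top J)^{-1}J^\top r_\perp = 0\), because \(r_\perp\perp\mathrm{Range}(J)\) means \(J^\top r_\perp=0\). It follows that
\[
\Delta x_{\mathrm{GN},\alpha} = -\alpha_t J^+ r_t = -\alpha_t J^+ r_\parallel ,
\]
so exact Gauss--Newton sees only the explained component of the residual.

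Next I would expand the difference using linearity of \(G\) and \(J^+\):
\[
\Delta x_{\mathrm{DIPG}}-\Delta x_{\mathrm{GN},\alpha}
= \alpha_t (J^+-G) r_\parallel \;-\; \alpha_t G r_\perp .
\]
The first term is exactly the quantity controlled in the proof of Theorem~\ref{thm:deviation_gn}, with the residual replaced by \(r_\parallel\). Write \(r_\parallel = Ju\); such a \(u\) exists because \(r_\parallel\in\mathrm{Range}(J)\). Then \(J^+J=I\) gives \((J^+-G)Ju=(I-GJ)u\). The lower bound \(\|Ju\|_2\ge\sigma_{\min}(J)\|u\|_2\) then yields
\[
\|(J^+-G)r_\parallel\|_2 \le \frac{\|GJ-I\|_2}{\sigma_{\min}(J)}\,\|r_\parallel\|_2 .
\]
Equivalently, I can simply invoke Theorem~\ref{thm:deviation_gn} applied to the residual \(r_\parallel\), since that theorem is purely a linear-algebra statement about the vector being multiplied.

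Finally, the triangle inequality together with submultiplicativity \(\|Gr_\perp\|_2\le\|G\|_2\|r_\perp\|_2\) bounds the second term, and combining the two pieces gives \eqref{eq:full_residual_bound}.

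There is no real obstacle here. The only step needing care is the identity \(J^+r_\perp=0\), which I justify through the normal-equations form of \(J^+\) under full column rank; equivalently, \(JJ^+\) is the orthogonal projector onto \(\mathrm{Range}(J)\). A secondary point is to note that \(\|r_\parallel\|_2\le\|r_t\|_2\) and \(\|r_\perp\|_2\le\|r_t\|_2\) by orthogonality. This is not needed for the stated bound, but it shows that the bound specializes to Theorem~\ref{thm:deviation_gn} when \(r_\perp=0\).
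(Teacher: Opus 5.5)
Your proposal is correct and follows the paper's proof essentially step for step. You annihilate $J^+r_\perp$, split the difference as $\alpha_t(J^+-G)r_\parallel-\alpha_t G r_\perp$, reuse the range-restricted argument via $r_\parallel=Ju$ and $\sigma_{\min}(J)$, and close with the triangle inequality and submultiplicativity; your justification of $J^+r_\perp=0$ through $J^+=(J^\top J)^{-1}J^\top$ just makes explicit a step the paper asserts directly.
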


\begin{proof}
Decompose
\[
r_t=r_{\parallel}+r_{\perp}
\]
with \(r_{\parallel}\in\mathrm{Range}(J)\) and \(r_{\perp}\perp\mathrm{Range}(J)\). Then
\[
\Delta x_{\mathrm{DIPG}}-\Delta x_{\mathrm{GN},\alpha}
=
-\alpha_t G(r_{\parallel}+r_{\perp})+\alpha_t J^+(r_{\parallel}+r_{\perp}).
\]
Since \(r_{\perp}\perp \mathrm{Range}(J)\), one has \(J^+r_{\perp}=0\). Hence
\[
\Delta x_{\mathrm{DIPG}}-\Delta x_{\mathrm{GN},\alpha}
=
\alpha_t(J^+-G)r_{\parallel}-\alpha_t G r_{\perp}.
\]
Taking norms and applying the triangle inequality gives
\[
\|\Delta x_{\mathrm{DIPG}}-\Delta x_{\mathrm{GN},\alpha}\|_2
\le
\alpha_t\|(J^+-G)r_{\parallel}\|_2
+
\alpha_t\|G r_{\perp}\|_2.
\]
Because \(r_{\parallel}\in \mathrm{Range}(J)\), there exists \(u\) such that \(r_{\parallel}=Ju\). Repeating the argument from the proof of Theorem~\ref{thm:deviation_gn},
\[
\|(J^+-G)r_{\parallel}\|_2
=
\|(I-GJ)u\|_2
\le
\frac{\|GJ-I\|_2}{\sigma_{\min}(J)}\|r_{\parallel}\|_2.
\]
Also,
\[
\|G r_{\perp}\|_2 \le \|G\|_2\|r_{\perp}\|_2.
\]
Combining the two bounds yields \eqref{eq:full_residual_bound}.
\end{proof}

\subsection{Additional discussion on the range-restricted regime}
\label{app:range_restricted_discussion}

Given the local Jacobian \(J=J_f(x_t)\), any residual \(r_t\in\mathbb{R}^{d_{\mathrm{out}}}\) admits the orthogonal decomposition
\[
r_t = r_{\parallel}+r_{\perp},
\qquad
r_{\parallel}\in\mathrm{Range}(J),
\qquad
r_{\perp}\in\mathrm{Range}(J)^\perp.
\]
Theorem~\ref{thm:deviation_gn} applies to the component \(r_{\parallel}\), which is precisely the part explained by the local linearization of the forward model. Writing \(r_{\parallel}=Ju\), the bound quantifies how accurately D-IPG reproduces the damped Gauss--Newton direction on the locally recoverable component of the residual. The orthogonal component \(r_{\perp}\) is invisible to the pseudoinverse of \(J\) and therefore lies outside the scope of the local Gauss--Newton comparison. This is why the theorem is naturally local and range-restricted, rather than a global statement about arbitrary residuals.

\section{Extended Empirical Evaluation and Diagnostic Analyses}
\label{app:extended_eval}

This appendix provides the empirical record supporting the main-text claims. Its role is fourfold. First, it establishes that the representative problems shown in the main paper are not cherry-picked by extending the analysis to the full benchmark suite. Second, it sharpens the geometric interpretation of the learned pullback by isolating diagnostics that do not collapse to mean performance alone. Third, it documents the scope of the Jacobian Composition Penalty (JCP), including regimes where its effect is large, negligible, or mildly restrictive. Fourth, it reports hyperparameter sensitivity, qualitative reconstructions, robustness summaries, and amortization measurements in an auditable format.

Throughout the appendix, we preserve the same distinction between evaluation criteria used in the main text. Performance profiles and ECDF-style solver comparisons use the normalized residual tolerance
\[
r_t/r_0 \le 0.30.
\]
Mechanistic basin analyses on Allen--Cahn-2D use the good-basin criterion
\[
\min_t \mathrm{RMSE}_t < 0.095.
\]
Benchmark success rates reported in the summary tables use the task-specific RMSE quality threshold recorded in the corresponding summary CSV for each problem. This separation is essential because a method may reduce the residual sufficiently quickly while still failing to achieve satisfactory reconstruction quality.

\paragraph{Pooled benchmark view across the full instance set.}
Figure~\ref{fig:appendix_a1} extends the performance-profile analysis of the main paper to the full pooled benchmark. The three panels use the same visual vocabulary as Figure~1 in the main text: a Dolan--Mor\'e profile on wall-clock time, a Dolan--Mor\'e profile on iterations, and a data profile under an absolute wall-clock budget. The point of this figure is not to replace the per-problem analysis, but to show how relative solver rankings behave once all available instances are aggregated.

A crucial point is that the pooled asymptote of D-IPG is limited by Heat-1D. Heat-1D contributes \(300\) instances and D-IPG succeeds on only \(24.3\%\) of them under the task-specific RMSE success criterion. Heat-1D should therefore be interpreted as a specific failure mode rather than as evidence against the broader mechanism. In this benchmark, the learned reverse operator does not achieve sufficiently reliable local inverse structure, and the resulting failures dominate the pooled asymptote because the problem contributes many instances. This is why we isolate Heat-1D explicitly rather than allowing it to be hidden inside the pooled benchmark. Excluding Heat-1D, D-IPG succeeds on all remaining pooled instances in the current benchmark.

\begin{figure*}[t]
    \centering
    \includegraphics[width=\textwidth]{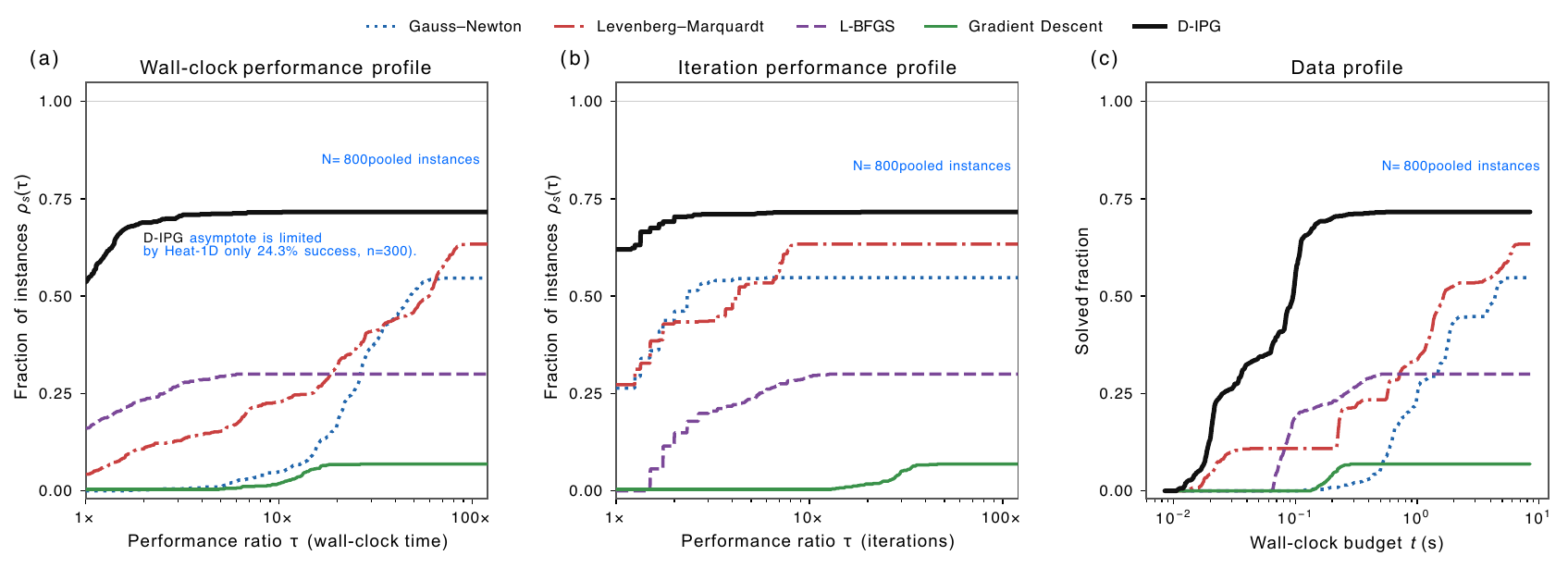}
    \caption{Pooled solver profiles across the full benchmark. (a) wall-clock Dolan--Mor\'e profile. (b) iteration Dolan--Mor\'e profile. (c) data profile under an absolute wall-clock budget. All three panels use the normalized residual tolerance criterion \(r_t/r_0 \le 0.30\). The D-IPG asymptote is limited primarily by Heat-1D (\(24.3\%\) success over \(n=300\) instances), which is a known failure mode rather than a contradiction of the main-text results. Excluding Heat-1D, D-IPG attains perfect success on the remaining pooled benchmark instances.}
    \label{fig:appendix_a1}
\end{figure*}

\paragraph{Update geometry across problems.}
Figure~\ref{fig:appendix_a2} reports the cosine similarity between the D-IPG update direction and the negative gradient across problems where this diagnostic is available. The point of this figure is not to claim full benchmark completeness, but to show that the learned pullback does not collapse to a single geometric regime. Some successful problems are nearly gradient-aligned, while others are strongly corrective.

The most informative case is Navier--Stokes-2D, where D-IPG attains \(100\%\) success despite a cosine similarity close to zero. This shows that the method is not merely recovering a rescaled negative-gradient direction. Instead, the learned pullback can generate corrective updates that are geometrically distinct from first-order descent while remaining effective. At the other extreme, Heat-3D is strongly gradient-aligned and also succeeds, so the relevant conclusion is not that one geometric regime is universally better, but that D-IPG remains effective across qualitatively different update geometries. Heat-1D provides the counterexample: its intermediate cosine and low success rate show that geometric deviation alone does not determine reliability.

\begin{figure}[t]
    \centering
    \includegraphics[width=0.55\columnwidth]{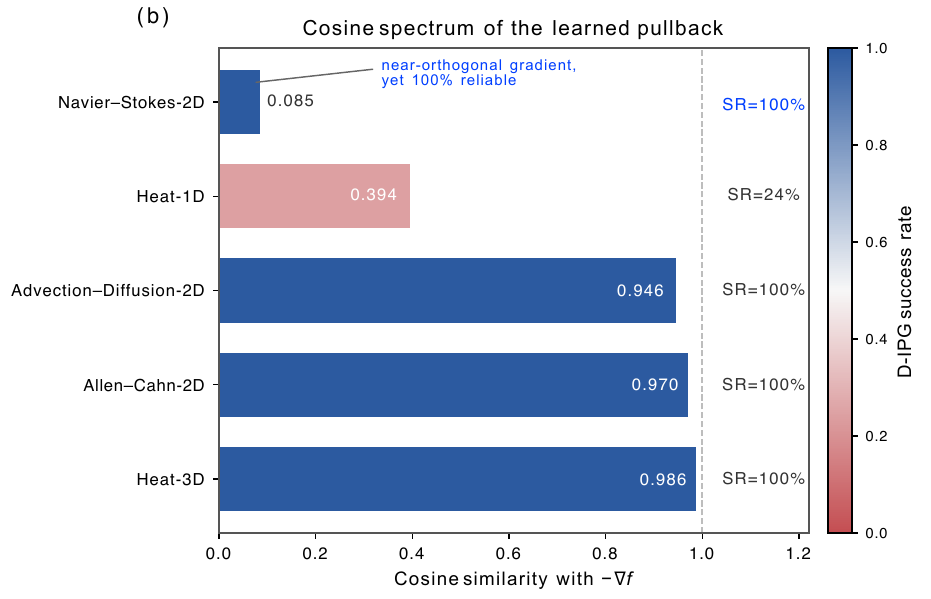}
    \caption{Cosine spectrum of the learned pullback across problems with available cosine diagnostics. Bar length gives the cosine similarity between the D-IPG update and \(-\nabla f\); bar color encodes the D-IPG success rate under the task-specific RMSE criterion. The figure should be read as a geometric spectrum rather than a complete benchmark summary. In particular, Navier--Stokes-2D is nearly orthogonal to the raw gradient direction yet remains fully reliable, showing that D-IPG is not simply learning gradient descent.}
    \label{fig:appendix_a2}
\end{figure}

\paragraph{Reproducibility geometry under JCP.}
Figure~\ref{fig:appendix_a3} isolates a geometric effect of JCP that is distinct from mean speed or mean RMSE. Panel~(a) shows that the coefficient of variation of final RMSE across trials is monotonically associated with final RJCP across all seven problems and both JCP conditions. The appropriate statistical claim is therefore monotone association rather than a precisely estimated functional law. We report the nonparametric rank correlation
\[
\rho_{\mathrm{Spearman}} = 0.807,\qquad p < 0.001,\qquad N = 14,
\]
which is robust to the limited sample size and does not require distributional assumptions. The safe conclusion is that larger composition error is associated with less reproducible final outcomes.

Panel~(b) shows that this effect remains visible even on a problem where both conditions achieve \(100\%\) success. On Navier--Stokes-2D, the standard deviation across trials of \(\log(r_t/r_1)\) is systematically smaller under \(+\)JCP than under \(-\)JCP. At iteration \(6\), the \(-\)JCP bundle is approximately \(36\%\) wider. Thus JCP is not merely rescuing failed runs on a hard basin-structured problem; it also tightens the convergence geometry on problems where success rates are already saturated. This supports the interpretation that JCP improves the reproducibility of the learned pullback. Its role is not to enforce monotone improvement in every scalar diagnostic, but to improve the reliability of the induced inverse geometry.

\begin{figure*}[t]
    \centering
    \includegraphics[width=0.80\textwidth]{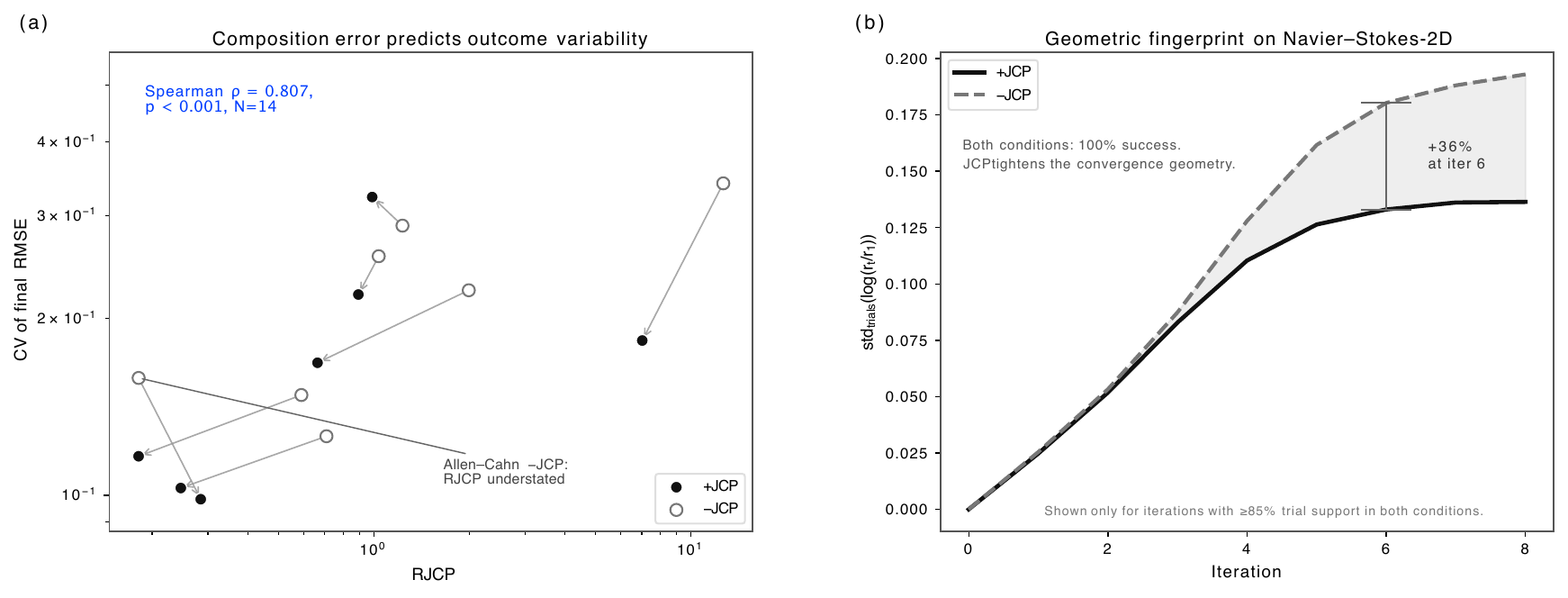}
    \caption{Reproducibility effects of the Jacobian Composition Penalty. Left: cross-problem association between final RJCP and the coefficient of variation of final RMSE, using both JCP conditions across seven PDE families. Filled markers denote \(+\)JCP and open markers denote \(-\)JCP; arrows connect paired conditions for the same problem. The appropriate statistical claim is monotone association, quantified by Spearman \(\rho=0.807\), \(p<0.001\), \(N=14\). Right: trajectory-bundle width on Navier--Stokes-2D, measured as \(\mathrm{std}_{\mathrm{trials}}(\log(r_t/r_1))\). Both conditions achieve \(100\%\) success, yet \(+\)JCP yields a visibly tighter bundle; at iteration \(6\), the \(-\)JCP bundle is approximately \(36\%\) wider. Shown iterations are restricted to those with at least \(85\%\) trial support in both conditions to avoid unstable tail effects.}
    \label{fig:appendix_a3}
\end{figure*}

\paragraph{Hyperparameter analysis on Allen--Cahn-2D.}
Figure~\ref{fig:appendix_a4} reports two controlled ablations on Allen--Cahn-2D. The left column varies the number of Hutchinson probes used in the JCP estimator. The success rate is non-monotone: performance improves from \(k=1\) to \(k=4\), peaks near \(96.25\%\), and then degrades at \(k=8\). The \(-\)JCP curve remains nearly flat, showing that probe count affects estimator quality only when the JCP term is present. The validation RJCP curve exhibits a low-RJCP regime around \(k=2\) to \(4\), consistent with a bias--variance tradeoff rather than a monotone benefit from more probes.

The right column varies the D-IPG step size. The trajectory panel shows qualitatively distinct failure modes: small step sizes stagnate after long runs, while excessively large step sizes descend quickly but settle into the wrong basin. Among the tested values, only \(\alpha=1.0\) attains high success (\(96.25\%\)). This is an honest limitation, and it is structurally interpretable: since \(y^{\mathrm{prop}}=y_t-\alpha r_t\), the value \(\alpha=1\) sends the proposal to \(y^\star\). Smaller values under-correct in measurement space, while larger values extrapolate past the target; on basin-structured problems such as Allen--Cahn-2D, this can determine whether the reverse map lands in the correct basin.

\begin{figure*}[t]
    \centering
    \includegraphics[width=0.76\textwidth]{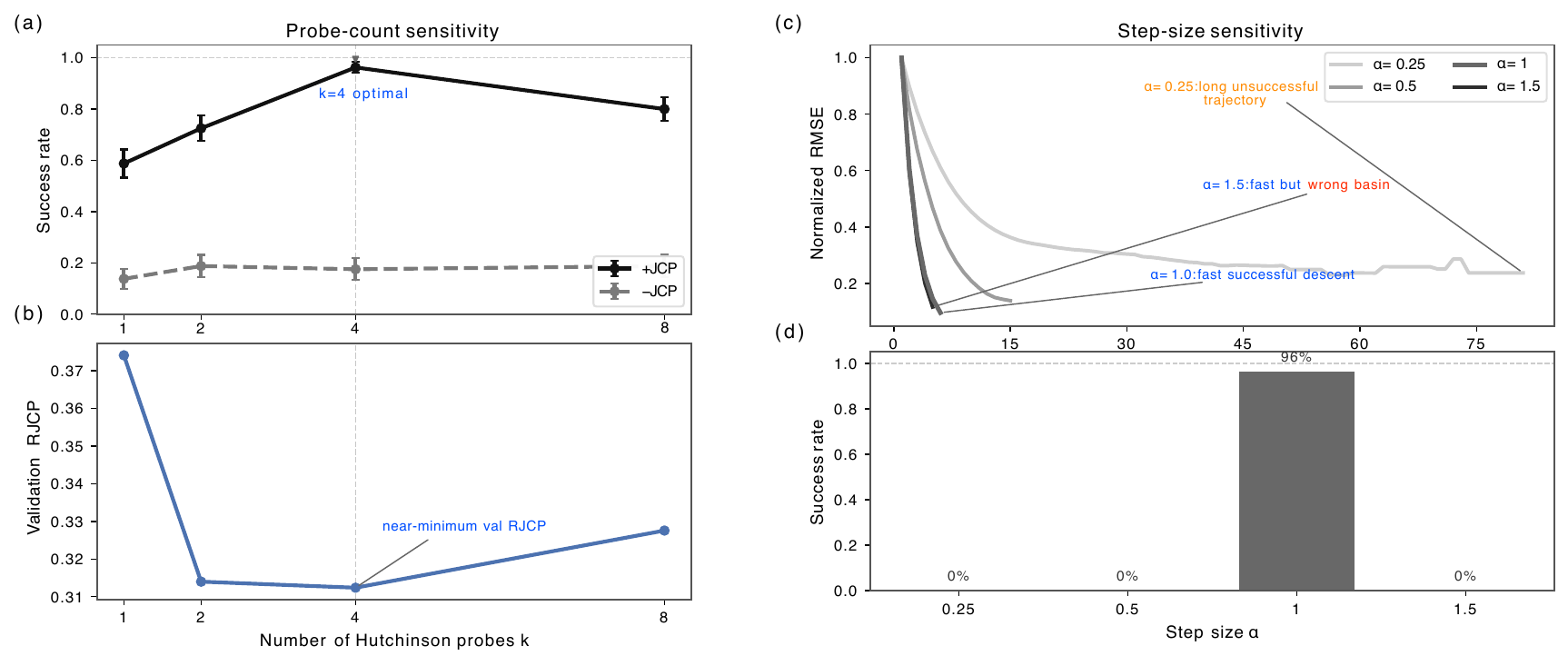}
    \caption{Hyperparameter analysis on Allen--Cahn-2D. Left: probe-count sensitivity. Top: success rate versus number of Hutchinson probes \(k\) for \(+\)JCP and \(-\)JCP. Bottom: validation RJCP versus \(k\), showing a low-RJCP regime around \(k=2\) to \(4\). Right: step-size sensitivity. Top: normalized RMSE trajectories for different step sizes \(\alpha\), revealing distinct failure modes (slow stagnation for small \(\alpha\), fast but incorrect descent for large \(\alpha\)). Bottom: success rates under the task-specific RMSE criterion. Only \(\alpha=1.0\) attains high success, indicating that Armijo backtracking helps but does not remove step-size sensitivity.}
    \label{fig:appendix_a4}
\end{figure*}

\paragraph{Qualitative reconstruction examples.}
Figure~\ref{fig:appendix_reconstruction} provides representative reconstructions for the three main-text problems. These plots are not intended as a substitute for the quantitative benchmark tables; rather, they show the recovered fields directly and help relate the optimization diagnostics to the visual quality of the reconstructed states. This figure is particularly useful for inverse-problem audiences, who often want to confirm that numerical improvements translate into visibly improved reconstructions.

The examples are consistent with the quantitative results. D-IPG(+JCP) recovers the underlying structures cleanly across all three showcased problems, while the strongest baselines exhibit method-specific degradation modes, including oversmoothing, noisy artifacts, or incomplete recovery of the target geometry. We therefore treat this figure as qualitative corroboration, not as independent evidence of superiority.

\begin{figure*}[t]
    \centering
    \includegraphics[width=0.76\textwidth]{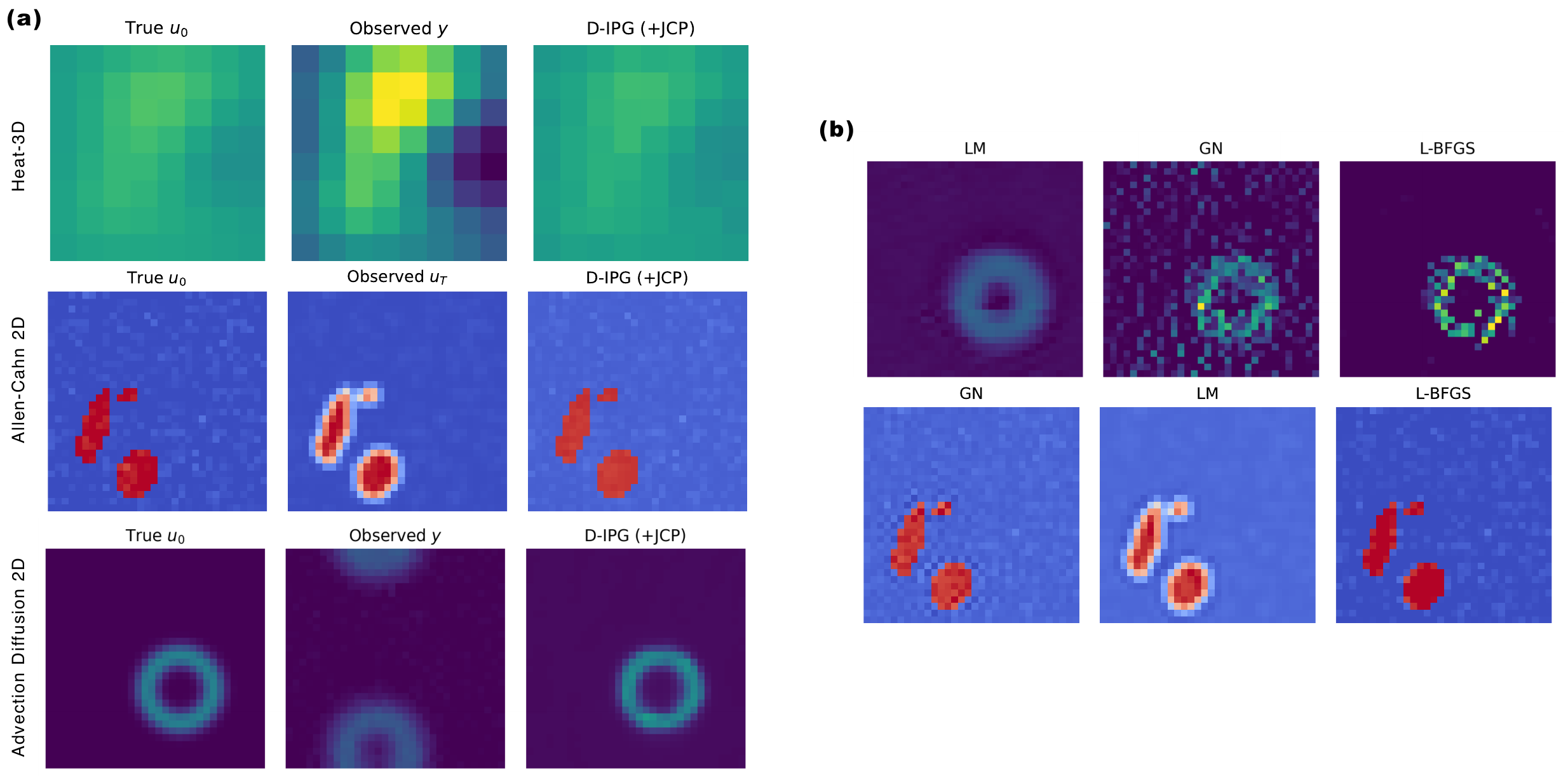}
    \caption{Representative qualitative reconstructions on three benchmark problems. Left block: ground truth, observed measurement, and D-IPG(+JCP) reconstruction. Right block: corresponding baseline reconstructions for the same instances. The figure is intended as qualitative corroboration of the quantitative benchmark and mechanism analyses, showing that the improvements reported in the main text translate into visibly cleaner recovered fields.}
    \label{fig:appendix_reconstruction}
\end{figure*}

\paragraph{Appendix tables.}
To make the appendix auditable, we include compact tables exposing the exact metrics underlying the main-text and appendix figures. Table~\ref{tab:metric_definitions} summarizes evaluation criteria. Table~\ref{tab:main_text_problem_full_results} gives complete method-level results for the three representative main-text problems. Table~\ref{tab:full_benchmark_summary} summarizes the full seven-problem benchmark. Table~\ref{tab:jcp_head_to_head} isolates the effect of JCP at the per-problem level. Table~\ref{tab:per_iteration_operations} summarizes inference-time per-iteration structure. Table~\ref{tab:seed_robustness_compact} reports seed robustness, and Table~\ref{tab:amortization_break_even} reports amortized training-cost break-even points.

\begin{table}[t]
\centering
\caption{Evaluation criteria used across the main paper and appendix.}
\label{tab:metric_definitions}
\small
\begin{tabular}{ll}
\toprule
Location & Criterion \\
\midrule
Figure~\ref{fig:performance} and Figure~\ref{fig:appendix_a1} 
& solved if $r/r_0 \le 0.30$ \\
Figure~\ref{fig:mechanism} 
& good basin if $\min_t \mathrm{RMSE}_t < 0.095$ on Allen--Cahn-2D \\
Figure~\ref{fig:reliability} and benchmark success tables 
& task-specific RMSE threshold from summary CSVs \\
Figure~\ref{fig:appendix_a3}(a) 
& coefficient of variation of final RMSE across trials \\
Figure~\ref{fig:appendix_a3}(b) 
& $\mathrm{std}_{\mathrm{trials}}(\log(r_t/r_1))$ \\
\bottomrule
\end{tabular}
\end{table}

\begin{table*}[t]
\centering
\caption{Complete method-level results for the three representative main-text problems. Success rate uses the task-specific RMSE quality threshold from the summary CSVs. RJCP is reported only for D-IPG variants. Classical baselines use the forward surrogate only and are therefore reported once per problem.}
\label{tab:main_text_problem_full_results}
\resizebox{\textwidth}{!}{
\begin{tabular}{llccccc}
\toprule
Problem & Method / Ablation & Success Rate & Median Iters & Mean RMSE & Mean Time (s) & Mean Final RJCP \\
\midrule
\multirow{5}{*}{Heat-3D}
& D-IPG (+JCP)     & 1.0000 & 5  & 0.063546 & 0.033150 & 0.181220 \\
& D-IPG (-JCP)     & 1.0000 & 5  & 0.057742 & 0.030329 & 0.590936 \\
& Gradient Descent & 0.0000 & 80 & 0.124150 & 0.337239 & ---      \\
& GN               & 0.2875 & 7  & 0.333657 & 1.149244 & ---      \\
& LM               & 1.0000 & 6  & 0.064407 & 0.816331 & ---      \\
\midrule
\multirow{6}{*}{Advection--Diffusion-2D}
& D-IPG (+JCP)     & 1.0000 & 7  & 0.054671 & 0.159624 & 0.988080 \\
& D-IPG (-JCP)     & 1.0000 & 6  & 0.050482 & 0.153320 & 1.230998 \\
& Gradient Descent & 0.0000 & 80 & 0.292912 & 1.573286 & ---      \\
& GN               & 0.0500 & 14 & 0.160834 & 7.319908 & ---      \\
& LM               & 0.6875 & 12 & 0.087936 & 3.161762 & ---      \\
& L-BFGS           & 0.3750 & 84 & 0.146741 & 0.974034 & ---      \\
\midrule
\multirow{6}{*}{Allen--Cahn-2D}
& D-IPG (+JCP)     & 1.0000 & 6  & 0.093849 & 0.149842 & 0.284511 \\
& D-IPG (-JCP)     & 0.1625 & 5  & 0.121996 & 0.122813 & 0.181261 \\
& Gradient Descent & 0.0000 & 80 & 0.930444 & 1.614167 & ---      \\
& GN               & 0.4250 & 6  & 0.100346 & 2.765452 & ---      \\
& LM               & 0.0000 & 55 & 0.197546 & 11.622093 & ---      \\
& L-BFGS           & 0.7500 & 31 & 0.076070 & 0.359573 & ---      \\
\bottomrule
\end{tabular}
}
\end{table*}

\begin{sidewaystable}[p]
\centering
\caption{Full benchmark summary across all seven PDE inverse problems. D-IPG(+JCP) is the proposed method. Best-baseline columns report the strongest non-D-IPG method under the task-specific RMSE success criterion. Blank RMSE entries indicate that the corresponding aggregate RMSE was not available in the same summary table and is therefore left unspecified rather than imputed.}
\label{tab:full_benchmark_summary}
\small
\setlength{\tabcolsep}{6pt}
\begin{tabular}{lcccccccc}
\toprule
Problem & D-IPG(+JCP) SR & D-IPG(-JCP) SR & Best baseline & Best-baseline SR & D-IPG RMSE & Best-baseline RMSE & D-IPG median iters & D-IPG time (s) \\
\midrule
Heat-1D & 0.243 & 0.200 & Gradient Descent & 0.133 & 0.2704 & 0.3219 & 35.0 & 0.068 \\
Heat-2D & 1.000 & 1.000 & Gradient Descent & 1.000 & 0.0810 & 0.1204 & 5.0 & 0.026 \\
Heat-3D & 1.000 & 1.000 & LM & 1.000 & 0.0635 & 0.0644 & 5.0 & 0.033 \\
Darcy-2D & 0.688 & 0.662 & LM & 0.590 & 0.1659 & --- & 2.0 & 0.053 \\
Advection--Diffusion-2D & 1.000 & 1.000 & LM & 0.6875 & 0.0547 & 0.0879 & 7.0 & 0.160 \\
Allen--Cahn-2D & 1.000 & 0.163 & L-BFGS & 0.750 & 0.0938 & 0.0761 & 6.0 & 0.150 \\
Navier--Stokes-2D & 1.000 & 1.000 & GN & 0.250 & 0.0532 & --- & 30.0 & 0.932 \\
\bottomrule
\end{tabular}
\end{sidewaystable}

\begin{table*}[t]
\centering
\caption{Per-problem effect of JCP on D-IPG. Positive $\Delta$SR indicates a reliability gain from the Jacobian Composition Penalty. Blank entries indicate diagnostics not recorded in the available summary files and are intentionally left unspecified rather than imputed.}
\label{tab:jcp_head_to_head}
\resizebox{\textwidth}{!}{
\begin{tabular}{lccccccccc}
\toprule
Problem & SR(+JCP) & SR(-JCP) & $\Delta$SR & RMSE(+JCP) & RMSE(-JCP) & RJCP(+JCP) & RJCP(-JCP) & Cosine(+JCP) & Cosine(-JCP) \\
\midrule
Heat-1D & 0.243 & 0.200 & 0.043 & 0.2704 & 0.2811 & 0.6642 & 1.9922 & 0.394 & 0.431 \\
Heat-2D & 1.000 & 1.000 & 0.000 & 0.0810 & 0.0726 & 0.2463 & 0.7080 & 0.978 & 0.980 \\
Heat-3D & 1.000 & 1.000 & 0.000 & 0.0635 & 0.0577 & 0.1812 & 0.5909 & 0.986 & 0.987 \\
Darcy-2D & 0.688 & 0.662 & 0.025 & 0.1659 & 0.1702 & 0.8937 & 1.0360 & --- & --- \\
Advection--Diffusion-2D & 1.000 & 1.000 & 0.000 & 0.0547 & 0.0505 & 0.9881 & 1.2310 & 0.946 & 0.940 \\
Allen--Cahn-2D & 1.000 & 0.163 & 0.838 & 0.0938 & 0.1220 & 0.2845 & 0.1813 & 0.970 & 0.972 \\
Navier--Stokes-2D & 1.000 & 1.000 & 0.000 & 0.0532 & 0.0289 & 7.0189 & 12.6399 & 0.085 & 0.111 \\
\bottomrule
\end{tabular}
}
\end{table*}

\begin{table}[t]
\centering
\caption{Inference-time per-iteration structure. JCP is a training-time penalty and adds no runtime cost to the D-IPG update. When Armijo acceptance is used, D-IPG additionally evaluates a directional derivative through \(f_W\), and this cost is included in the reported wall-clock timings.}
\label{tab:per_iteration_operations}
\small
\setlength{\tabcolsep}{4.5pt}
\begin{tabular}{lll}
\toprule
Method & Core per-iteration operation & Main cost driver \\
\midrule
GD & \(J_f(x_t)^\top r_t\) & one reverse-mode gradient \\
D-IPG & \(f_W(x_t)\), then \(g_V(f_W(x_t)-\alpha_t r_t)\) & network evaluations \(+\) Armijo directional derivative \\
GN & solve \((J_f^\top J_f)\Delta x=-J_f^\top r_t\) & Jacobian-based linear solve \\
LM & solve \((J_f^\top J_f+\lambda I)\Delta x=-J_f^\top r_t\) & damped Jacobian-based linear solve \\
L-BFGS & gradient \(+\) two-loop recursion & line-search-dependent quasi-Newton step \\
\bottomrule
\end{tabular}
\end{table}

\paragraph{Seed robustness.}
We additionally ran a three-seed robustness study for each benchmark. The main empirical conclusions are stable across seeds. In particular, D-IPG retains perfect or near-perfect success on Heat-2D, Heat-3D, Advection--Diffusion-2D, and Navier--Stokes-2D where applicable, while the qualitative advantage of \(+\)JCP over \(-\)JCP on Allen--Cahn-2D persists under reseeding. On Darcy-2D, the same pattern remains visible with moderate variability, and Heat-1D continues to appear as a genuine failure mode rather than an artifact of a single run. These seed studies support the robustness of the main-text conclusions: the strong performance on Heat-3D, Advection--Diffusion-2D, and Allen--Cahn-2D persists under reseeding, while Heat-1D and Darcy-2D remain the principal benchmark-specific limitations.

\begin{table}[t]
\centering
\caption{Compact three-seed robustness summary. Values report mean \(\pm\) standard deviation across seeds. Best baseline is selected by mean success rate within each benchmark.}
\label{tab:seed_robustness_compact}
\small
\setlength{\tabcolsep}{4.5pt}
\begin{tabular}{lcccc}
\toprule
Benchmark & D-IPG(+JCP) SR & D-IPG(-JCP) SR & Best baseline SR & D-IPG(+JCP) time (s) \\
\midrule
Heat-3D & \(1.000 \pm 0.000\) & \(1.000 \pm 0.000\) & \(1.000 \pm 0.000\) (LM) & \(0.029 \pm 0.000\) \\
Darcy-2D & \(0.638 \pm 0.038\) & \(0.596 \pm 0.058\) & \(0.533 \pm 0.056\) (LM) & \(0.019 \pm 0.016\) \\
Adv.-Diff.-2D & \(0.992 \pm 0.014\) & \(0.996 \pm 0.007\) & \(0.629 \pm 0.101\) (LM) & \(0.168 \pm 0.007\) \\
Allen--Cahn-2D & \(0.950 \pm 0.087\) & \(0.392 \pm 0.255\) & \(0.758 \pm 0.295\) (L-BFGS) & \(0.149 \pm 0.002\) \\
\bottomrule
\end{tabular}
\end{table}

\section{Implementation Details}
\label{app:implementation}

\subsection{Armijo acceptance with amortized proposals}
\label{app:armijo_amortized}

In standard line-search analysis, sufficiently small step sizes are accepted when the search direction is descent and the proposed step vanishes as the step size tends to zero. D-IPG differs slightly because the proposal is generated by the learned reverse map,
\[
x^{\mathrm{prop}}(\alpha)
=
g_V(f_W(x_t)-\alpha r_t).
\]
As \(\alpha\to 0\), this proposal approaches \(g_V(f_W(x_t))\), not necessarily \(x_t\). Therefore, if the zero-order reconstruction error \(g_V(f_W(x_t))-x_t\) is large, the induced step need not vanish and Armijo backtracking may reject all tested step sizes. In our implementation, backtracking is therefore used as a practical acceptance safeguard: when no tested proposal gives sufficient decrease, the solver terminates rather than accepting an unreliable update. This behavior is included in the reported wall-clock timings and accepted-step statistics.

\subsection{Experimental setup and training details}
\label{app:experimental_setup}

All experiments were implemented in Python with PyTorch. Random seeds were synchronized across \texttt{random}, NumPy, and PyTorch, with \texttt{torch.cuda.manual\_seed\_all} additionally used when CUDA was available. Unless otherwise noted, each benchmark used a learned forward map \(f_W\), a learned reverse map \(g_V\), and the D-IPG solver defined in Section~\ref{sec:method}. The \(+\mathrm{JCP}\) and \(-\mathrm{JCP}\) variants shared the same forward model and differed only in the final reverse-map training stage, where the Jacobian Composition Penalty was either included or omitted.

\paragraph{Compute resources.}
All benchmarks were run on a single NVIDIA T4 GPU with CUDA support in Google Colab, together with standard multi-core CPU resources. For the main-paper benchmarks, a typical primary run required approximately \(1\)--\(1.5\) hours, while the corresponding seed-based robustness study required approximately \(3\)--\(4\) hours. Across all reported experiments, the total computational budget was on the order of tens of GPU-hours, excluding minor postprocessing costs such as CSV aggregation and figure generation.

\paragraph{Common training protocol.}
Most benchmarks followed a three-stage procedure. First, the forward surrogate \(f_W\) was trained to map latent states to observations. Second, \(f_W\) was frozen and the reverse map \(g_V\) was pretrained using reconstruction and cycle-consistency objectives. Third, two reverse models were initialized from the same pretrained state and trained separately with and without JCP, yielding the \(+\mathrm{JCP}\) and \(-\mathrm{JCP}\) variants used in evaluation. Across the CNN-based PDE benchmarks, optimization used Adam with cosine-annealing learning-rate schedules, and gradients were clipped to a maximum norm of \(5.0\). Model selection combined reconstruction quality with RJCP, and the final reverse stage used a reconstruction guard to prevent improvements in composition quality from being accepted at the cost of substantial degradation in state recovery. The benchmark-specific epoch counts, learning rates, and loss weights are listed in Table~\ref{tab:hyperparameters}; additional solver-specific settings are provided in the released benchmark scripts.

\begin{table}[!t]
\centering
\caption{Per-benchmark training hyperparameters. Stages S1--S3 denote forward-surrogate training, reverse-map pretraining, and reverse-map fine-tuning with JCP, respectively. All runs use Adam with weight decay \(10^{-6}\), \(\lambda_{\mathrm{task}}=1.0\), and \(\lambda_{\mathrm{rec}}=1.0\).}
\label{tab:hyperparameters}
\small
\setlength{\tabcolsep}{3.5pt}
\begin{tabular}{lcccccccc}
\toprule
& \multicolumn{3}{c}{Epochs} & \multicolumn{3}{c}{Learning rate} 
& \multicolumn{2}{c}{Loss weights} \\
\cmidrule(lr){2-4} \cmidrule(lr){5-7} \cmidrule(lr){8-9}
Problem & S1 & S2 & S3 & \(\eta_1\) & \(\eta_2\) & \(\eta_3\) 
& \(\lambda_{\mathrm{cyc}}\) & \(\lambda_{\mathrm{JCP}}\) \\
\midrule
Heat-1D            & 140 & 100 & 120 & \(2{\times}10^{-3}\) & \(2{\times}10^{-3}\) & \(10^{-3}\) & 0.25 & ---\(^{\dagger}\) \\
Heat-2D            & 160 & 120 & 140 & \(2{\times}10^{-3}\) & \(2{\times}10^{-3}\) & \(10^{-3}\) & 0.15 & 0.50 \\
Heat-3D            & 140 & 100 & 120 & \(2{\times}10^{-3}\) & \(2{\times}10^{-3}\) & \(10^{-3}\) & 0.15 & 0.35 \\
Darcy-2D           & 140 &  60 &  80 & \(2{\times}10^{-3}\) & \(2{\times}10^{-3}\) & \(10^{-3}\) & 0.20 & 0.20 \\
Allen--Cahn-2D     & 120 &  80 &  40 & \(2{\times}10^{-3}\) & \(10^{-3}\) & \(5{\times}10^{-4}\) & 0.05 & 0.01 \\
Adv.--Diff.-2D     & 120 &  80 &  40 & \(2{\times}10^{-3}\) & \(10^{-3}\) & \(5{\times}10^{-4}\) & 0.05 & 0.01 \\
Navier--Stokes-2D  & 120 &  80 &  15 & \(2{\times}10^{-3}\) & \(10^{-3}\) & \(5{\times}10^{-4}\) & 0.05 & \(10^{-3}\) \\
\bottomrule
\multicolumn{9}{l}{\footnotesize \(^{\dagger}\) Heat-1D uses MLP-specific stabilization losses \(\lambda_{\mathrm{bias}}=5{\times}10^{-4}\) and \(\lambda_{\mathrm{comp}}=10^{-3}\),}\\
\multicolumn{9}{l}{\footnotesize rather than the probe-JCP weight used in the CNN-based benchmarks.}
\end{tabular}
\end{table}

\paragraph{Inference-time solvers.}
For a target observation \(y^\star\) and current iterate \(x_t\), D-IPG used the update
\[
y_{t+1}^{\mathrm{prop}} = f_W(x_t)-\alpha_t\bigl(f_W(x_t)-y^\star\bigr),
\qquad
x_{t+1}^{\mathrm{prop}} = g_V\!\left(y_{t+1}^{\mathrm{prop}}\right),
\]
followed by Armijo backtracking on the surrogate objective
\[
\Phi(x)=\frac{1}{2}\,\mathrm{mean}\!\left((f_W(x)-y^\star)^2\right).
\]
Accepted proposals were relaxed and projected to box constraints exactly as in Algorithm~\ref{alg:dipg}. When Armijo acceptance is used, D-IPG additionally evaluates the directional derivative \(\nabla\Phi(x_t)^\top p_t\), computed by automatic differentiation through \(f_W\); this cost is included in the reported wall-clock timings. Unless overridden by benchmark-specific settings, the inverse-problem experiments used Armijo constant \(c=10^{-4}\), relaxation parameter \(\rho=0.4\), and at most \(8\) backtracking steps. The main baselines were Gradient Descent, Gauss--Newton, and Levenberg--Marquardt, with L-BFGS included in several two-dimensional PDE benchmarks. In the PDE experiments, Gauss--Newton and Levenberg--Marquardt linear systems were solved using conjugate gradient applied through normal-equation operators, while L-BFGS used the standard PyTorch implementation.

\paragraph{Training-time amortization.}
The inference-time speedups in the main paper exclude the one-time cost of training the Deceptron module. To quantify the amortization regime, we measured
\[
T_{\mathrm{train}}
=
T_{\mathrm{stage1}}+T_{\mathrm{stage2}}+T_{\mathrm{stage3,+JCP}},
\]
where the three terms correspond to training the forward surrogate, pretraining the reverse map, and then continuing reverse-map training with the JCP objective. Measurements were performed on a single NVIDIA T4 GPU in Google Colab without High-RAM mode. Seed sweeps, ablations, figure generation, CSV postprocessing, and trace export were excluded. For each baseline with average per-instance solve cost \(C_{\mathrm{base}}\), and D-IPG solve cost \(C_{\mathrm{DIPG}}\), the amortized break-even point is
\[
N_{\mathrm{break}}
=
\frac{T_{\mathrm{train}}}{C_{\mathrm{base}}-C_{\mathrm{DIPG}}},
\qquad
C_{\mathrm{base}}>C_{\mathrm{DIPG}}.
\]
Thus \(N_{\mathrm{break}}\) estimates the number of inverse instances after which the one-time training cost is recovered relative to that baseline. Including \(T_{\mathrm{stage1}}\) is conservative for D-IPG, since the same trained forward surrogate is also used by the classical baselines evaluated on the surrogate objective; if forward training is treated as a shared prerequisite, the reported break-even points decrease. Table~\ref{tab:amortization_break_even} reports the corresponding wall-clock measurements.

\begin{table}[!t]
\centering
\caption{Amortized training-cost break-even analysis on the three main-text benchmarks.}
\label{tab:amortization_break_even}
\small
\setlength{\tabcolsep}{5pt}
\begin{tabular}{llrrrr}
\toprule
Benchmark & Baseline &
\(T_{\mathrm{train}}\) &
\(C_{\mathrm{DIPG}}\) &
\(C_{\mathrm{base}}\) &
\(N_{\mathrm{break}}\) \\
\midrule
Heat-3D & GN
& 1238.28 & 0.03395 & 1.01181 & 1266.33 \\
Heat-3D & LM
& 1238.28 & 0.03395 & 0.80940 & 1596.86 \\
\midrule
Adv.-Diff.-2D & GN
& 1596.15 & 0.13526 & 3.05891 & 545.95 \\
Adv.-Diff.-2D & LM
& 1596.15 & 0.13526 & 3.03719 & 550.03 \\
Adv.-Diff.-2D & L-BFGS
& 1596.15 & 0.13526 & 1.07835 & 1692.47 \\
\midrule
Allen--Cahn-2D & GN
& 1592.89 & 0.14720 & 2.86459 & 586.19 \\
Allen--Cahn-2D & LM
& 1592.89 & 0.14720 & 11.62872 & 138.74 \\
Allen--Cahn-2D & L-BFGS
& 1592.89 & 0.14720 & 0.41055 & 6048.76 \\
\bottomrule
\end{tabular}
\end{table}

Table~\ref{tab:amortization_break_even} shows that the upfront training cost is recovered after a few hundred to a few thousand inverse solves on the representative benchmarks, depending on the baseline. The break-even point is smallest when the baseline repeatedly incurs expensive Jacobian-based solves, as in LM on Allen--Cahn-2D, and largest against relatively cheap quasi-Newton solves such as L-BFGS. These results support the intended use of D-IPG as an amortized inverse solver: the method is most advantageous when many inverse instances are solved for the same forward family, while one-off inverse problems may not benefit from the training-time investment.

\FloatBarrier

\paragraph{Architectures, benchmark families, and logged outputs.}
Heat-1D used an MLP-style Deceptron, with a shallow forward map and a slightly more expressive reverse map. The spatial benchmarks used lightweight CNN-based Deceptron models, with both \(f_W\) and \(g_V\) implemented as shallow residual convolutional networks matched to the dimensionality of the task. The benchmark suite consisted of \(1\)-, \(2\)-, and \(3\)-dimensional heat-equation initial-condition recovery, together with harder two-dimensional PDE inverse problems. Heat-1D used a nonlinear observation operator built on top of the one-dimensional heat semigroup. Heat-2D and Heat-3D used nonlinear diffusive forward processes with blur and pointwise observation distortions. Darcy-2D estimated permeability fields from pressure observations generated by an elliptic PDE solver. Advection--Diffusion-2D used a Fourier-domain transport--diffusion propagator with degraded observations. Allen--Cahn-2D used a spectral semi-implicit phase-field solver. Navier--Stokes-2D used a pseudo-spectral incompressible vorticity solver and future-snapshot observations. Across all benchmarks, datasets were generated synthetically from procedurally sampled latent fields, split into training, validation, and test sets, normalized using statistics computed from the training observations, and constrained at inference time whenever appropriate for the underlying state variable. We report success rate, iteration counts, state RMSE, final residual, accepted-step fraction, and wall-clock time; for D-IPG, we additionally record RJCP and step-level diagnostics such as accepted step size, step norm, residual decrease, and cosine similarity with the negative gradient.   

\end{document}